\documentclass[final,3p,times]{elsarticle}




\usepackage{dsfont}

\usepackage{lineno} 
\modulolinenumbers[1]
\journal{Pattern Recognition}

\usepackage{array}

\usepackage{graphicx}
\graphicspath{{./pdf/}{./jpeg/}{./png/}}
\DeclareGraphicsExtensions{.png}
\usepackage{multirow}
\usepackage{amsmath}
\usepackage{amsfonts}
\usepackage{array}
\usepackage{comment}
\usepackage[caption=false,font=footnotesize,labelfont=sf,textfont=sf]{subfig}
\usepackage{fixltx2e}
\usepackage{stfloats}

\usepackage{algorithm}
\usepackage{algorithmic}

\usepackage{url}
\usepackage{color}

\DeclareMathOperator*{\argmax}{arg\,max}

\def\Circlearrowright{\ensuremath{%
  \rotatebox[origin=c]{90}{$\circlearrowright$}}}

\usepackage{amsmath}

\usepackage{amssymb}
\usepackage{amsthm}
\usepackage{color}

\usepackage{graphicx}

\usepackage[numbers]{natbib}

\bibliographystyle{elsarticle-num}

\newtheorem{theorem}{Theorem}
\newtheorem{lemma}{Lemma}
\newtheorem{proposition}{Proposition}

\begin{document}

\begin{frontmatter}



\title{Revisiting Classical Multiclass Linear Discriminant Analysis with a Novel Prototype-based Interpretable Solution}


\author[comp_dep]{Kamaledin Ghiasi-Shirazi}
\ead{k.ghiasi@um.ac.ir}
\address[comp_dep]{Department of Computer Engineering, Ferdowsi University of Mashhad (FUM), Mashhad, Khorasan Razavi, Iran}

\begin{abstract}
Linear discriminant analysis (LDA) is a fundamental method for feature extraction and dimensionality reduction. Despite having many variants, classical LDA has its own importance, as it is a keystone in human knowledge about statistical pattern recognition. For a dataset containing $C$ clusters, the classical solution to LDA extracts at most $C-1$ features. Here, we introduce a novel solution to classical LDA, called LDA++, that yields $C$ features, each interpretable as measuring similarity to one cluster. This novel solution bridges dimensionality reduction and multiclass classification. Specifically, we prove that,  for homoscedastic Gaussian data and under some mild conditions, the optimal weights of a linear multiclass classifier also make an optimal solution to LDA. In addition, we show that LDA++ reveals some important new facts about LDA that remarkably changes our understanding of classical multiclass LDA after 75 years of its introduction. We provide a complete numerical solution for LDA++ for the cases 1) when the scatter matrices can be constructed explicitly, 2) when constructing the scatter matrices is infeasible, and 3) the kernel extension.
\end{abstract}

\begin{keyword}
Linear discriminant analysis (LDA)\sep interpretability\sep non-discriminative feature-weighting\sep dimensionality reduction\sep multiclass classification
\end{keyword}
\end{frontmatter}


\section{Introduction}\label{sec:introduction}
Linear discriminant analysis (LDA) is a fundamental statistical method for feature extraction and dimensionality reduction that has numerous applications in many scientific fields, including statistical pattern recognition, machine learning, and computer vision. 
This analysis was first proposed by Fisher \cite{fisher1936use} for two classes and, later, Rao \cite{rao1948utilization} generalized it to multiple classes. 
The essence of LDA is reducing the dimensionality of data using a linear transformation $Y=A^T X$ in such a way that the scatter of data belonging to the same classes is minimized while the scatter of data belonging to different classes is maximized.

The objective function of classical LDA is $tr\{(S_2^Y)^{-1}S_1^Y\}$, where $S_1^Y$ and $S_2^Y$ are two appropriately chosen scatter matrices in the feature space $Y$ among the total scatter matrix $S_t^Y$, the within-class scatter matrix $S_w^Y$, and the between-class scatter matrix $S_b^Y$. 
This objective function is invariant to nonsingular linear transformations in the feature space, which is a desirable property since these transformations do not affect the performance of the Bayes classifier \citep{fukunaga1990statistical}.
Consequently, LDA does not have a unique solution, as all nonsingular linear transformations of an optimal solution are themselves optimal solutions.
We consider the generalized objective function $tr\{(S_t^Y)^{\dagger}S_b^Y\}$ which has been advocated by Ye \citep{ye2005characterization} after investigating the objective functions for LDA comprehensively.
Traditionally, this objective function is optimized by solving the generalized eigenvalue problem
	\begin{eqnarray}
\label{eq:generalized-eigen}
S_1 A = S_2 A \Lambda,
\end{eqnarray}
where $S_1$ and $S_2$ refer to some scatter matrices in the input space $X$, $A$ is the matrix of eigenvectors, and $\Lambda$ is the diagonal matrix of eigenvalues. We refer to this method for solving LDA as EIG-LDA.

One of the main limitations of EIG-LDA is that the resulting linear feature extractors are not interpretable \citep{zhang2015sparse, wen2018robust, Dornaika2020141}. For example, in face recognition, Fisherfaces are a complex combination of training faces and don't provide a comprehensible interpretation of the functionality of LDA. 
In this paper, we propose a new numerical solution to classical LDA, which provides a prototype-based interpretation of LDA and its functionality.
We first speculate and then prove that $A=S_t^{\dagger}M$ is an optimal solution to LDA, where $S_t$ is the total scatter matrix and $M$ is the vector of means of all clusters minus the mean of all training data.
We refer to this novel solution of LDA as LDA++.

In contrast to EIG-LDA, where the number of extracted features is $C-1$, LDA++ extracts $C$ features, each one corresponding to a cluster.
The feature associated with each cluster shows the similarity of the input data to the center of that cluster, measured by the unsupervised metric induced by $S_t^\dagger$. 
Therefore, our method provides a prototype-based interpretation of the linear filters learned by LDA++. Interestingly, the requirement of having $C$ interpretable scores for each of the clusters has naturally occurred in the recent research of Zheng et al. \citep{Zheng2021}, where the authors, not having the machinery introduced in this paper, have used $C$ binary classification problems to learn specific feature spaces for each cluster.
Nevertheless, it may be argued that extracting one more feature is a weakness of the proposed method. 
We also prove that by throwing away any one of the features, the remaining $C-1$  features also optimize the objective function of LDA. 
We find the relation between EIG-LDA and  LDA++ and show that EIG-LDA can be regarded as the combination of LDA++ with $C-1$ features and a metric-changing transformation. 
We then apply a similar metric-changing transformation to the $C$-features LDA++ and call the resulting method EIG-LDA++.
In addition, we show that, under some mild conditions, $A=S_w^{-1}M$ is another optimal solution to LDA, and again the optimality is preserved even after one of the features is arbitrarily eliminated. 
This result connects LDA with multiclass classification since $A=S_w^{-1}M$ is also an optimal solution to multiclass classification of homoschedastic Gaussian data.

While our focus is on classical LDA, we try to modernize the proposed method in several aspects.
Firstly, as we stated above, we use the generalized objective function $tr\{(S_t^Y)^{\dagger}S_b^Y\}$ which is always defined.
Secondly, in classical LDA, the number of extracted features is limited to the number of classes minus one, which means a severe and inappropriate reduction in the dimensionality of data \citep{hastie1996discriminant, sugiyama2007dimensionality, lai2018robust}. 
The general remedy to this limitation is to use a clustering algorithm and incorporate the subclass structure into the LDA analysis \citep{zhu2006subclass, gkalelis2011mixture, wan2017separability, Chumachenko2021} and to choose all eigenvectors with nonzero eigenvalues for feature extraction\footnote{In fact, when less than $C-1$ features are extracted, LDA becomes non-optimal and may not yield the most discriminative features \citep{martinez2005linear, hamsici2008bayes}.}.
In this paper, we assume that LDA is applied to subclasses and $C$ stands for the number of clusters. 
Thirdly, for high-dimensional (HD) data, the computation and storage of the scatter matrices become prohibitive and, for small sample size (SSS) datasets, the empirical scatter matrices become very poor estimates of the true ones.
Some researchers proposed to circumvent this problem by first applying PCA/KPCA to reduce the dimensionality of data and then use LDA on top of the low-dimensional PCA/KPCA features \citep{belhumeur1997eigenfaces, yang2004essence, yang2005kpca}.
A more principled solution to this problem is a clever algorithm \citep{ye2005characterization} based on singular value decomposition (SVD), which we review in Section~\ref{sec:lda-num-alg}. 
We give two numerical algorithms for 1) the simple case of low-dimensional large sample size (LDLSS) setting, in which the scatter matrices can be computed explicitly, and 2) the challenging case of high-dimensional or small sample size (HD/SSS) setting, in which the solution is found without computing the scatter matrices.
Finally, we show how LDA++ leads to an efficient variant of kernel LDA which has many benefits over the classical one \citep{baudat2000generalized}.

The rest of the paper proceeds as follows.  
In Section~\ref{sec:lda-background}, we review some background material on LDA, introduce the classical numerical algorithms for LDLSS and HD/SSS settings, and review kernel LDA.
In Section~\ref{sec:linear-classification-background}, we briefly discuss the closely related problem of linear multiclass classification.
In Section~\ref{sec:obj}, we choose an objective function which is defined even for singular scatter matrices.
We introduce LDA++ in Section~\ref{sec:proposed-method} and propose two numerical algorithms for LDLSS and HD/SSS settings and a simpler numerical solution for kernel LDA.
In Section~\ref{sec:sep-dim-red-and-metric-changing}, we show that EIG-LDA and LDA++ solutions can be related with a metric-changing transformation and derive another solution called EIG-LDA++.
In Section~\ref{sec:new-findings-lda}, we mention some important new findings which are in contrast with general knowledge about LDA.
In Section~\ref{sec:experiments}, we experimentally evaluate the proposed method on some artificial and real-world datasets.
We conclude the paper in Section~\ref{sec:conclusion}.

\section{Linear Discriminant Analysis}\label{sec:lda-background}
In this section, we review some background material on LDA.
In Section~\ref{sec:notations}, we introduce the basic notations. 
We review the classical formulation of LDA in Section~\ref{sec:lda-classics} and give detailed numerical algorithms for LDA in LDLSS and HD/SSS settings in Section~\ref{sec:lda-num-alg}. 
We review classical kernel LDA \citep{baudat2000generalized} in Section~\ref{sec:lda-kernel}.
\subsection{Notations}
\label{sec:notations}
Assume that we have $N$ training samples $x_1,x_2,...,x_N\in\mathbb{R}^D$ which belong to $C$ clusters. 
In matrix notation, let $X$ be an $N\times D$ matrix whose rows are training samples.
For $c\in\{1,2,...,C\}$, let $s_c$ and $t_c$ denote the start and end indices for samples of cluster $c$.
Let $\mu$ be the mean of all training samples and, for $c\in\{1,2,...,C\}$, let $\mu_c$ be the mean of samples of cluster $c$.
We denote the identity matrices of all sizes with $I$, assuming that the size can be inferred from the context.
Let $e$ denote the $N$-dimensional vector of all ones.
For $n\in\{1,2,...,N\}$, we define $e^{<n>}$ as an $N$-dimensional zero vector with a one in the $n$-th entry.
For $c\in\{1,2,...,C\}$, let $e^{(c)}$ denote an $N$-dimensional vector in which the entries from $s_c$ to $t_c$ are one and the other entries are zero. 
We can write $\mu = \frac{1}{N} X^T e$ and, for $c\in\{1,2,...,C\}$, we have $\mu_c = \frac{1}{N_c} X^T e^{(c)}$, where $N_c$ denotes the number of samples of cluster $c$. The within-cluster ($S_w$), between-cluster ($S_b$), and total ($S_t$) scatter matrices are defined as
\begin{eqnarray}
\begin{aligned}
S_w &= \frac{1}{N} \sum_{c=1}^C \sum_{n=s_c}^{t_c} (x_n - \mu_c)(x_n - \mu_c)^T \\
S_b &= \sum_{c=1}^C \frac{N_c}{N} (\mu_c-\mu)(\mu_c-\mu)^T \\
S_t &= S_w + S_b = \frac{1}{N} \sum_{n=1}^N(x_n-\mu)(x_n-\mu)^T.
\end{aligned}
\end{eqnarray}
The scatter matrices can also be written as  \citep{howland2004generalizing, ye2005characterization} 
\begin{eqnarray}
\label{eq:SH-wbt}
\begin{aligned}
S_w &= X^T G_w X = H_w^T H_w \\
S_b &= X^T G_b X = H_b^T H_b \\
S_t &= X^T G_t X = H_t^T H_t,
\end{aligned}
\end{eqnarray}
where 
\begin{eqnarray}
\begin{aligned}
\label{eq:G-wbt}
G_w &= \frac{1}{N} \left(I - \sum_{c=1}^C \frac{1}{N_c} e^{(c)} {e^{(c)^T}}\right) \\
G_b &= \frac{1}{N} \left(\sum_{c=1}^C \frac{1}{N_c} e^{(c)} {e^{(c)^T}} - \frac{1}{N} e e^T\right) \\
G_t &= \frac{1}{N} \left(I- \frac{1}{N} ee^T\right)
\end{aligned}
\end{eqnarray}
and
\begin{eqnarray}
\label{eq:H-wbt}
\begin{aligned}
H_w &= \sqrt{\frac{1}{N}}\Big[x_{s_1}-\mu_1, ..., x_{t_1}-\mu_1,...,  
x_{s_C}-\mu_C, ..., x_{t_C}-\mu_C\Big]\\
H_b &= \begin{bmatrix} \sqrt{\frac{N_1}{N}}(\mu_1-\mu), ..., \sqrt{\frac{N_C}{N}}(\mu_C-\mu)\end{bmatrix}^T \\
H_t &= \frac{1}{\sqrt{N}} (X- e \mu^T). 
\end{aligned}
\end{eqnarray}
\subsection{Problem Formulation}
\label{sec:lda-classics}
LDA seeks for a matrix $A_{D\times F}$ such that the transformation $y=A^T x$ maps an input data $x\in\mathbb{R}^D$ into a feature vector $y\in \mathbb{R}^F$ with fewer dimensions \citep{fukunaga1990statistical}. 
The scatter matrices in the feature space are related to those of the input space with the following relations:
\begin{eqnarray*}
\begin{aligned}
S_w^Y &= A^T S_w A \\
S_b^Y &= A^T S_b A \\
S_t^Y &= A^T S_t A .
\end{aligned}
\end{eqnarray*}

LDA chooses a transformation $A$ that maximizes the objective function $tr\left\{(S_2^Y)^{-1}S_1^Y\right\}$. 
All choices $(S_b^Y,S_w^Y), (S_b^Y,S_t^Y)$, and $(S_t^Y,S_w^Y)$  for the pair $(S_1^Y,S_2^Y)$ yield the same result \citep{fukunaga1990statistical, howland2004generalizing}\footnote{
The above ratio trace objecive function is equivalent to the trace ratio objective function $tr\{S_1^Y\}/tr\{S_2^Y\}$ (see footnote 5 in \cite{wan2017separability}). The arguments mentioned by Wang et al. \citep{wang2007trace} and Jia et al. \citep{jia2009trace} about the inferiority of the ratio trace objective function are only relevant to those dimensionality reduction methods that impose some form of restriction on the transformation matrix $A$, not LDA.}.
Since the rank of $S_b$ is less than $C$, at most $C-1$ features can be extracted (i.e. $F\le C-1$).
The classical solution to this problem is to solve the generalized eigenvalue problem
\begin{eqnarray*}
S_1 \phi_i = \lambda_i S_2 \phi_i
\end{eqnarray*}
and to choose $A=[\phi_1,...,\phi_F]$, where $\phi_i$ is the eigenvector corresponding to the $i$-th largest non-zero eigenvalue $\lambda_i$.
\subsection{Classical numerical algorithms for LDA}
\label{sec:lda-num-alg}
In this section, we review classical algorithms for solving LDA.
We seek for a matrix solution $A$ to the problem of maximizing $tr\left\{(A^T S_2 A)^{\dagger}A^T S_1 A\right\}$.
Classical methods for LDA find a solution by solving the generalized eigenvalue problem
 (\ref{eq:generalized-eigen}).
In the LDLSS setting, (\ref{eq:generalized-eigen}) is solved by computing the scatter matrices explicitly and then computing the eigenvectors corresponding to the non-zero eigenvalues. 
The pseudocode for this algorithm is shown in Algorithm~\ref{alg:lda-eigen}.
However, in the HD/SSS setting, storing the scatter matrices requires large amount of memory and/or the scatter matrices become singular.
In these situations, Algorithm~\ref{alg:lda-svd} which is based on SVD can efficiently find the solution.
Ye \citep{ye2005characterization} proved that this algorithm solves (\ref{eq:generalized-eigen}) when $S_2=S_t$ and $S_1=S_b$\footnote{ 
Nevertheless, Algorithm~\ref{alg:lda-svd} is erroneously used in Scikit-learn package with $S_2=S_w$.
A counterexample, showing that Algorithm~\ref{alg:lda-svd} actually does not solve (\ref{eq:generalized-eigen}) when $S_2=S_w$, is given in 
the supplementary material in the jupyter notebook file 'exp\_verify\_svd\_solver.ipynb'.
In addition, recently Cao et al. \citep{Cao2019218}, not being aware of \citep{ye2005characterization}, rediscovered the importance of using pseudoinverse in the objective function of LDA but proposed the  problematic choice of $S_2=S_w$ and $S_1=S_b$.
}. 
For the sake of integrity, here, we introduce a new mathematical derivation of this algorithm.
We proceed with general $S_1$ and $S_2$ and meanwhile show exactly where the assumptions $S_2=S_t$ and $S_1=S_b$ become necessary.
Assuming that $S_2 = H_2^T H_2$ and $S_1 = H_1^T H_1$, we rewrite (\ref{eq:generalized-eigen}) as
\begin{eqnarray}
H_1^T H_1 A = H_2^T H_2 A \Lambda.
\label{eq:H-eigensystem}
\end{eqnarray}
Let $U \begin{bmatrix}\Sigma & 0 \\ 0 & 0 \end{bmatrix} V^T$ be the full SVD of $H_2$, where $\Sigma$ is a diagonal matrix with non-zero diagonal elements and $U$ and $V$ are unitary matrices.
Assuming that $H_2$ has $D_2$ rows, the dimensions of $U$ and $V$ are $D_2\times D_2$ and $D\times D$, respectively. 
Let $r$ denote the dimension of the square matrix $\Sigma$. 
Then, the eigen-decomposition of $S_2$ is 
$V \begin{bmatrix}\Sigma^2 & 0 \\ 0 & 0 \end{bmatrix} V^T$. 
Since $V$ is square and unitary, $V^T V = V V^T = I$. Swapping the two sides of (\ref{eq:H-eigensystem}), we have
\begin{eqnarray*}
\begin{aligned}
V \begin{bmatrix}\Sigma^2 & 0 \\ 0 & 0 \end{bmatrix} V^T A \Lambda &= H_1^T H_1 A
\Leftrightarrow \\
 \begin{bmatrix}\Sigma^2 & 0 \\ 0 & 0 \end{bmatrix} V^T A \Lambda &= V^T H_1^T H_1 V V^T A,
\end{aligned}
\end{eqnarray*} 
and therefore  
\begin{eqnarray}
\label{eq:lda-svd-general}
\begin{aligned}
\begin{bmatrix}\Sigma^2 & 0 \\ 0 & 0 \end{bmatrix} V^T A \Lambda
&= \begin{bmatrix}\Sigma & 0 \\ 0 & I \end{bmatrix} 
\begin{bmatrix}\Sigma^{-1} & 0 \\ 0 & I \end{bmatrix} 
V^T H_1^T H_1 V V^T A.
\end{aligned}
\end{eqnarray}

Up until now, the derivation is true for all choices for the scatter matrices $S1$ and $S2$. 
Now, we ought to be specific about the choice of these scatter matrices. 
Ye \citep{ye2005characterization} proved that, when $S_2=S_t$ and $S_1=S_b$, then the last $D-r$ rows and columns of $V^T S_b V=V^T H_1^T H_1 V$ are zero \citep[see][Section 3.1, Eq.(12)]{ye2005characterization}, i.e.
\begin{eqnarray*}
V^T S_b V=V^T H_1^T H_1 V
= \begin{bmatrix} \tilde{S}_b & 0 \\ 0 & 0 \end{bmatrix}.
\end{eqnarray*}
It follows that the last $D-r$ rows on both sides of (\ref{eq:lda-svd-general}) are zero.
Consequently, (\ref{eq:lda-svd-general}) can be continued as follows:
\begin{eqnarray*}
\begin{aligned}
& \begin{bmatrix}\Sigma^2 & 0 \\ 0 & 0 \end{bmatrix} V^T A \Lambda
= \begin{bmatrix}\Sigma & 0 \\ 0 & I \end{bmatrix} 
\begin{bmatrix}\Sigma^{-1} & 0 \\ 0 & I \end{bmatrix} 
V^T H_1^T H_1 V V^T A \Leftrightarrow\\
& \begin{bmatrix}\Sigma^2 & 0\end{bmatrix} V^T A \Lambda
= \begin{bmatrix}\Sigma & 0 \end{bmatrix} 
\begin{bmatrix}\Sigma^{-1} & 0 \\ 0 & I \end{bmatrix} 
V^T H_1^T H_1 V V^T A \Leftrightarrow\\
&  \begin{bmatrix}\Sigma & 0\end{bmatrix} 
\begin{bmatrix}\Sigma & 0 \\ 0 & I \end{bmatrix} V^T A \Lambda \\
&= 
\begin{bmatrix}\Sigma & 0 \end{bmatrix} 
\begin{bmatrix}\Sigma^{-1} & 0 \\ 0 & I \end{bmatrix} 
V^T H_1^T H_1 V 
\begin{bmatrix}\Sigma^{-1} & 0 \\ 0 & I \end{bmatrix} 
\begin{bmatrix}\Sigma & 0 \\ 0 & I \end{bmatrix} V^T A.
\end{aligned}
\end{eqnarray*}
Therefore
\begin{eqnarray}
\begin{bmatrix}\Sigma & 0 \end{bmatrix} B \Lambda= 
\begin{bmatrix}\Sigma & 0 \end{bmatrix} Y^T Y B,
\label{eq:eigensystem-B}
\end{eqnarray}
where 
\begin{eqnarray}
\label{eq:B-definition}
B=\begin{bmatrix}\Sigma & 0 \\ 0 & I \end{bmatrix} V^T A
\end{eqnarray} 
and
\begin{eqnarray}
\label{eq:Y-definition}
Y=H_1 V \begin{bmatrix}\Sigma^{-1} & 0 \\ 0 & I \end{bmatrix}.
\end{eqnarray}

Let $\tilde{U} \tilde{\Sigma} \tilde{V}^T$ be the reduced SVD of $Y$. 
By substituting $Y^T Y$ with $\tilde{V} \tilde{\Sigma}^2 \tilde{V}^T$ in 
 (\ref{eq:eigensystem-B}), we arrive at the equation
\begin{eqnarray}
\begin{aligned}
\label{eq:svd-eigendecomp}
\begin{bmatrix}\Sigma & 0 \end{bmatrix} B \Lambda= 
\begin{bmatrix}
\Sigma & 0 \end{bmatrix} \tilde{V} \tilde{\Sigma}^2 \tilde{V}^T B.
\end{aligned}
\end{eqnarray}
It is now easy to verify that the choices $B=\tilde{V}$ and $\Lambda = \tilde{\Sigma}^{2}$ solve (\ref{eq:svd-eigendecomp}). 
Restating the solution in terms of $A$, the solution of (\ref{eq:H-eigensystem}) is $A=V\begin{bmatrix}\Sigma^{-1} & 0 \\ 0 & I \end{bmatrix} \tilde{V}$.

There is still a difference between our derived solution and the solution given by Algorithm~\ref{alg:lda-svd}. Here, we have used full SVD while Algorithm~\ref{alg:lda-svd} uses reduced SVD. 
From (\ref{eq:Y-definition}), we have the following equation which shows that the last $D-r$ rows and columns of $Y^T Y$ are zero:
\begin{eqnarray}
\label{eq:YTY-1}
\begin{aligned}
Y^T Y &= 
\begin{bmatrix}\Sigma^{-1} & 0 \\ 0 & I \end{bmatrix} 
V^T H_1^T H_1 V 
\begin{bmatrix}\Sigma^{-1} & 0 \\ 0 & I \end{bmatrix} 
=
\begin{bmatrix} \Sigma^{-1}\tilde{S}_b \Sigma^{-1} & 0 \\ 0 & 0 \end{bmatrix}. 
\end{aligned}
\end{eqnarray}
On the other hand, from $Y=\tilde{U} \tilde{\Sigma} \tilde{V}^T$ we have
\begin{equation}
\label{eq:YTY-2}
Y^T Y=\tilde{V} \tilde{\Sigma}^2 \tilde{V}^T
= (\tilde{\Sigma} \tilde{V}^T)^T
\tilde{\Sigma} \tilde{V}^T.
\end{equation} 
The above equation shows that the $i$-th diagonal entry of $Y^T Y$ is equal to the squared norm of the $i$-th column of $\tilde{\Sigma} \tilde{V}^T$. 
Considering the equality of (\ref{eq:YTY-1}) and (\ref{eq:YTY-2}), it follows that the last $D-r$ columns of $\tilde{\Sigma} \tilde{V}^T$ are zero. 
Since $\tilde{\Sigma}$ is non-singular, it follows that the last $D-r$ columns of $\tilde{V}^T$, or equivalently the last $D-r$ rows of $\tilde{V}$ are zero.
Thus, we can write $\tilde{V}=\begin{bmatrix}\tilde{V}_1 \\ 0 \end{bmatrix}$, where the number of rows of $\tilde{V}_1$ is $r$.

Now, Let $V=[V_1|V_2]$ be the partitioning of the columns of the matrix $V$ into the first $r$ and the remaining $D-r$ columns. The reduced SVD of $H_2$ yields only $V_1$. We have 
\begin{eqnarray*}
\begin{aligned}
A &= V\begin{bmatrix}\Sigma^{-1} & 0 \\ 0 & I \end{bmatrix} \tilde{V}
  = \begin{bmatrix}V_1 & V_2\end{bmatrix} 
\begin{bmatrix}\Sigma^{-1} & 0 \\ 0 & I \end{bmatrix} \begin{bmatrix}\tilde{V}_1 \\ 0\end{bmatrix}  
  =
V_1\Sigma^{-1}\tilde{V_1}.
\end{aligned}
\end{eqnarray*}
This justifies the use of reduced SVD twice in Algorithm~\ref{alg:lda-svd}.

\begin{algorithm}
\caption{Classical LDA algorithm in LDLSS setting}
\label{alg:lda-eigen}
\begin{algorithmic}[1]
\STATE Compute $S_1$ and $S_2$ explicitly.
\STATE Solve the eigensystem $S_1 A = S_2 A \Lambda$.
\RETURN A
\end{algorithmic}
\end{algorithm}

\begin{algorithm}
\caption{Classical LDA algorithm in HD/SSS setting}
\label{alg:lda-svd}
\begin{algorithmic}[1]
\STATE Compute $H_1$ and $H_2$ using (\ref{eq:H-wbt}).
\STATE Compute the reduced SVD of $H_2$ to obtain $H_2 = U \Sigma V^T$.
\COMMENT {Eigen-decomposition of $S_2$ is $S_2 = V \Sigma^2 V^T$.}
\STATE Compute $Y = H_1 V \Sigma^{-1}$.
\STATE Compute the reduced SVD of $Y$ to obtain $Y = \tilde{U} \tilde{\Sigma} \tilde{V}^T$.
\STATE $A = V \Sigma^{-1}\tilde{V}$
\RETURN A
\end{algorithmic}
\end{algorithm}
\subsection{Kernel LDA}
\label{sec:lda-kernel}
In this section, we review the method of Baudat and Anouar \cite{baudat2000generalized} for generalizing LDA to the feature space of a kernel function.
Assume that the input data belong to a set $X$.
Let $k:X\times X \to \mathbb{R}$ be a positive definite kernel function and let $\Phi:X\to H$ be a kernel map, where $H$ is a Hilbert space associated with the kernel function $k$.
The feature map $\Phi$ has the key property that for all $x,z\in X$, the equality $k(x,z)=\langle \Phi(x), \Phi(z) \rangle _H$ holds, where $\langle \rangle _H$ denotes the inner product in the Hilbert space $H$.
Assuming that the training data are centered in $H$, Baudat and Anouar \cite{baudat2000generalized} showed that the equivalent form of the eigenproblem $S_t A = S_b A \Lambda$ in the feature space is
\begin{eqnarray}
\label{eq:lda-kernel-problem}
KWK\alpha = \lambda KK\alpha,
\end{eqnarray}
where K is the kernel matrix of the training data, 
$\lambda$ is an eigenvalue, $\alpha$ is the vector of coefficients of the expansion of an eigenvector in the feature space $H$, as defined by
\begin{eqnarray*}
\sum_{n=1}^N\alpha_{n} \Phi(x_{n}),
\end{eqnarray*}
and $W$ is a block diagonal matrix containing $C$ blocks where the $i$-th block is an $N_i\times N_i$ matrix with all entries equal to $\frac{1}{N_i}$ as shown below:
 
\begin{eqnarray*}
W=\begin{bmatrix} \frac{1}{N_1} & ... & \frac{1}{N_1} &   &   &    &   &  &  &   \\
				  \vdots        &     & \vdots        &   &   &    &      &  &  &   \\
				  \frac{1}{N_1} & ... & \frac{1}{N_1} &   &   &    &  &   &  &   \\
				    &  &   & \frac{1}{N_2} & ... & \frac{1}{N_2} &    &  &   \\
				    &  &   & \vdots &  & \vdots &  &   &   &   \\
				    &  &   & \frac{1}{N_2} & ... & \frac{1}{N_2} &  &   &   &   \\				  
				    &  &   &   &  &   &  \ddots  &    &   &   \\				  
				    &  &   &   &  &   &  & \frac{1}{N_C} & ... & \frac{1}{N_C}\\
				    &  &   &   &  &   &  & \vdots &  & \vdots \\
				    &  &   &   &  &   &  & \frac{1}{N_C} & ... & \frac{1}{N_C}
\end{bmatrix}.
\end{eqnarray*}

To solve (\ref{eq:lda-kernel-problem}), Baudat and Anouar \cite{baudat2000generalized} proposed to first find the reduced eigenvector decomposition of the matrix $K$ as $U \Gamma U^T$. Then, (\ref{eq:lda-kernel-problem}) can be rewritten as
\begin{eqnarray}
\label{eq:kfda-big-formula}
U \Gamma U^T W U \Gamma U^T \alpha = \lambda U \Gamma U^T U \Gamma U^T \alpha.
\end{eqnarray}
Defining $\beta = \Gamma U^T \alpha$, (\ref{eq:kfda-big-formula}) can be rewritten as
\begin{eqnarray*}
U \Gamma U^T W U \beta = \lambda U \Gamma \beta.
\end{eqnarray*}
They assumed that this equation can be simplified to
\begin{eqnarray}
\label{eq:kfda-eigensystem}
U^T W U \beta = \lambda \beta
\end{eqnarray}
and proposed solving (\ref{eq:kfda-eigensystem}) for $\beta$ and $\lambda$ and then obtaining $\alpha$ by $U \Gamma^{\dagger} \beta$.
Algorithm~\ref{alg:lda-kernel} shows the classical method of Baudat and Anouar \cite{baudat2000generalized} for training kernel LDA.

\begin{algorithm}
\caption{Classical algorithm for training kernel LDA}
\label{alg:lda-kernel}
\begin{algorithmic}[1]
\STATE {Compute the kernel matrix $K$ of training data.}
\STATE {Compute the eigen-decomposition $K=U\Gamma U^T$.}
\STATE {Solve the eigensystem (\ref{eq:kfda-eigensystem}) for $\beta$.}
\FOR {$i=1,2,...C-1$}
\STATE {$\alpha^{(i)} = U \Gamma^{\dagger} \beta^{(i)}$}
\STATE {Divide $\alpha^{(i)}$ by $\sqrt{{\alpha^{(i)^T} K \alpha^{(i)}}}$ for normalization.}
\ENDFOR
\RETURN {$\alpha^{(1)},...,\alpha^{(C-1)}$}
\end{algorithmic}
\end{algorithm}

\section{Linear Multiclass Classification}\label{sec:linear-classification-background}
A problem that is highly related to LDA is the problem of linear multiclass classification. This relation is so strong that Rao \cite{rao1948utilization} analyzed linear discriminant analysis and linear multiclass classification in the same paper. He proposed the use of $C$ functions, which he called linear discriminant scores, for multiclass classification. 
In this section, we first find the Bayes optimal classifier for homoscedastic Gaussian data.
Then, we review some recent work on interpretable multi-prototype multiclass classification.
The topics of this section provide a background for investigating relations between multiclass classification and LDA in subsequent parts of the paper.
\subsection{Bayes linear classifiers for homoscedastic Gaussians}
\label{sec:bayes-linear-classifier}
Assume that data come from a homoscedastic Gaussian distribution, i.e. the covariance matrices of all classes are the same.
Assume that $\mu_1,...,\mu_C$ are the means of the classes and that $\Sigma_w$ is the common covariance matrix.
Note that since this covariance matrix shows the scatter of data within a class, it is the true within-class scatter matrix, and thus we have denoted it with a subscript $w$.
Besides, assume that, for $c\in\{1,...,C\}$, $P(c)$ denotes the prior probability of each class.
According to the Bayes theorem, the optimal classification rule is 
\begin{eqnarray*}
\begin{aligned}
\argmax_{c\in\{1,...,C\}}{P(c|x)} &= \argmax_{c\in\{1,...,C\}}{P(x|c)P(c)} \\
&=\argmax_{c\in\{1,...,C\}}{log{P(x|c)}+log{P(c)}}.
\end{aligned}
\end{eqnarray*}
By assumption, data of each class follows a normal distribution $P(x|c)=\mathcal{N}(x;\mu_c,\Sigma_w)$.
Therefore, the optimal Bayes classifier is
\begin{eqnarray*}
\begin{aligned}
&\argmax_{c\in\{1,...,C\}}{-\frac{1}{2}(x-\mu_c)^T \Sigma_w^{-1}(x-\mu_c) + log{P(c)}}\\
=& \argmax_{c\in\{1,...,C\}}{-\frac{1}{2} x^T \Sigma_w^{-1}x + (\Sigma_w^{-1}\mu_c)^T x -\frac{1}{2} \mu_c^T \Sigma_w^{-1}\mu_c + log{P(c)}}\\
=& \argmax_{c\in\{1,...,C\}}{w_c^T x+b_c},
\end{aligned}
\end{eqnarray*}
where $w_c = \Sigma_w^{-1}\mu_c$ and $b_c = -\frac{1}{2}\mu_c^T \Sigma_w^{-1}\mu_c + log {P(c)}$.
For each $c\in\{1,...,C\}$ we have
\begin{eqnarray*}
\begin{aligned}
P(c)&=1-\sum_{c'\neq c}{P(c')}\\
\mu_c &= \frac{1}{N_c}\left(\sum_{n=1}^{N}X_n - \sum_{c'\neq c} {N_{c'} \mu_{c'}}\right).
\end{aligned}
\end{eqnarray*}
Therefore, one of the $C$ linear discriminator scores is redundant and can be computed from the rest.
If we neglect the bias term, then the best discriminating features are $y=A^T x$, where $A=\Sigma_w^{-1}[\mu_1,...,\mu_{C-1}]$.
For $c\in\{1,...,C\}$, Rao \cite{rao1948utilization} called the values
\begin{eqnarray*}
\begin{aligned}
L_c = (\Sigma_w^{-1}\mu_c)^T x -\frac{1}{2} \mu_c^T \Sigma_w^{-1}\mu_c,
\end{aligned}
\end{eqnarray*}
which are independent of any \textit{a priori} information, linear discriminator scores. 
In Section~\ref{sec:relation-with-classification}, we will show that this optimal solution to the classification problem is also an optimal solution to the classical LDA.

\subsection{Multi-prototype Multiclass classification}
Until recently, linear multiclass classification methods only assigned one unit to each class. Ghiasi-Shirazi \cite{ghiasi2019competitive} proposed the competitive cross-entropy (CCE) method for training single-layer neural networks with multiple neurons for each class. He showed that the neurons of each class specialize at recognizing a cluster of data for that class. Considering each neuron as a feature extractor, a single-layer neural network with $C$ output neurons, one for each cluster, is similar to LDA++, as both methods extract $C$ features. 
In a more recent paper, Zarei-Sabzevar et al. \citep{ZareiGhiasi2022} proposed the $\pm$ED-WTA network which is a single-layer winner-takes-all neural network that yields prototypes for each of the $C$ output neurons. 
We will visualize the feature extractors and prototypes learned by LDA++ and $\pm$ED-WTA in the experiments of Section~\ref{sec:experiments-mnist}.

\section{Objective function for multiclass LDA}
\label{sec:obj}
There are several objective functions for multiclass LDA \citep{fukunaga1990statistical}.
These objective functions are defined based on two scatter matrices $S_1$ and $S_2$, which are selected from $S_w$, $S_b$, and $S_t$.
Optimizing these objective functions is equivalent to solving the generalized eigenvalue problem (\ref{eq:generalized-eigen}). In this paper, we follow the direction of Fukunaga \cite{fukunaga1990statistical}, Howland and Park \cite{howland2004generalizing}, Ye \cite{ye2005characterization} and consider the objective function $tr\left\{(A^T S_2 A)^{-1} A^T S_1 A\right\}$ which is invariant to nonsingular transformations of the matrix $A$.
Howland and Park \cite{howland2004generalizing} generalized this to the case where $S_2$ is singular and formulated the problem as a generalized singular value decomposition. 
They proposed to solve the following even more generalized eigenvalue problem
\begin{eqnarray}
\label{eq:S1ALam1eqS2ALam2}
S_1 A \Lambda_1 = S_2 A \Lambda_2,
\end{eqnarray}
where $\Lambda_1^2 + \Lambda_2^2 = I$ and $\Lambda_1$ and $\Lambda_2$ are diagonal matrices.
The value of the objective function is then $tr\left\{\Lambda_1^{-2} \Lambda_2^2\right\}$ which becomes infinity when at least one of the diagonal elements of $\Lambda_1$ is zero.
This can happen only if a generalized eigenvector (i.e. a column of $A$) falls within $\mathcal{N}(S_2)$ but not $\mathcal{N}(S_1)$, where $\mathcal{N}$ denotes the null space.
When a generalized eigenvector falls within the null space of both $S_1$ and $S_2$, then one column in both sides of the equation becomes zero and we have an option in choosing the corresponding entries in 
$\Lambda_1$ and $\Lambda_2$.
In this situation, Howland and Park \cite{howland2004generalizing} proposed to set the entry in $\Lambda_2$ equal to zero, so that its corresponding eigenvectors would be excluded.
In this paper, we choose $S_2=S_t$ since this choice leads to a more robust formulation of LDA compared to $S_2=S_w$ \citep{lu2005regularization}. 
We now prove a useful lemma.
\begin{lemma}
\label{lemma:nullspace-st-sb}
$\mathcal{N}(S_t)\subseteq \mathcal{N}(S_b)$.
\end{lemma}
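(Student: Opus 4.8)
The plan is to exploit two elementary facts recorded in the setup: first, that each of $S_w$, $S_b$, and $S_t$ is symmetric positive semidefinite, and second, that they obey the additive relation $S_t = S_w + S_b$ stated in the definition of the scatter matrices. The positive semidefiniteness is immediate from the factorizations in (\ref{eq:SH-wbt}): since $S_b = H_b^T H_b$ and $S_w = H_w^T H_w$, we have $v^T S_b v = \|H_b v\|^2 \ge 0$ and $v^T S_w v = \|H_w v\|^2 \ge 0$ for every vector $v$.

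The key working tool I would establish is the standard characterization of the null space of a positive semidefinite matrix: for such an $S$, one has $S v = 0$ if and only if $v^T S v = 0$. The forward implication is trivial; for the converse, writing $S = H^T H$ gives $v^T S v = \|H v\|^2$, so $v^T S v = 0$ forces $H v = 0$ and hence $S v = H^T H v = 0$. I would state this once and then apply it to both $S_t$ and $S_b$.

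With these ingredients the argument is short. Let $v \in \mathcal{N}(S_t)$. Then $v^T S_t v = 0$, and substituting $S_t = S_w + S_b$ yields $v^T S_w v + v^T S_b v = 0$. Since both summands are nonnegative by positive semidefiniteness, each must vanish separately; in particular $v^T S_b v = 0$. Applying the null-space characterization to $S_b$ then gives $S_b v = 0$, i.e. $v \in \mathcal{N}(S_b)$, which is exactly the claimed inclusion $\mathcal{N}(S_t)\subseteq \mathcal{N}(S_b)$.

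I do not expect a genuine obstacle here; the only point requiring care is to justify the positive-semidefinite null-space characterization rather than take it for granted, since it is the single nontrivial step. Everything else reduces to the additive decomposition of $S_t$ combined with nonnegativity of the quadratic forms.
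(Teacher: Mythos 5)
Your proof is correct, but it takes a genuinely different route from the paper's. The paper argues directly from the explicit factorizations $S_t = X^T G_t X$ and $S_b = X^T G_b X$: it uses the fact that $G_t = \frac{1}{N}(I-\frac{1}{N}ee^T)$ involves a projection to deduce that $v\in\mathcal{N}(S_t)$ forces $Xv$ to be a constant vector $\alpha e$, and then verifies by a hands-on computation with the block structure of $G_b$ (using $\sum_c e^{(c)} = e$) that such a $v$ is annihilated by $S_b$. You instead bypass all of that structure and rely only on the additive decomposition $S_t = S_w + S_b$ together with positive semidefiniteness of each summand and the standard characterization $Sv=0 \Leftrightarrow v^TSv=0$ for PSD matrices. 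Your argument is shorter and strictly more general: it applies to any decomposition of a PSD matrix as a sum of PSD matrices, and it yields $\mathcal{N}(S_t)\subseteq\mathcal{N}(S_w)$ for free at the same time (indeed $\mathcal{N}(S_t)=\mathcal{N}(S_w)\cap\mathcal{N}(S_b)$). What the paper's computation buys in exchange is a concrete description of $\mathcal{N}(S_t)$ as the set of directions $v$ for which $Xv$ is constant across all samples, which is not needed for the lemma but is more informative about the geometry. Both proofs are complete and rigorous; yours is the cleaner one for establishing the stated inclusion.
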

\begin{proof}
From (\ref{eq:SH-wbt}) and (\ref{eq:G-wbt}) we have
\begin{eqnarray*}
S_t = \frac{1}{N} X^T (I - \frac{1}{N} e e^T) X.
\end{eqnarray*}
Assume that $v\in \mathcal{N}(S_t)$. Then,
\begin{eqnarray*}
S_t v= \frac{1}{N} X^T (I - \frac{1}{N} e e^T) X v = 0.
\end{eqnarray*}
Since $(I - \frac{1}{N} e e^T)$ is a projection matrix, as one can easily show, we have
\begin{eqnarray*}
\begin{aligned}
v^T S_t v &= \frac{1}{N} v^T X^T (I - \frac{1}{N} e e^T)(I - \frac{1}{N} e e^T) X v \\
&= \left\| (I - \frac{1}{N} e e^T) X v\right\|^2=0.
\end{aligned}
\end{eqnarray*}
Therefore, $(I - \frac{1}{N} e e^T) X v=0$. Defining $w=Xv$, we have
\begin{eqnarray*}
\begin{aligned}
w = \frac{1}{N} e e^T w = (\frac{1}{N} e^T w) e = \alpha e,
\end{aligned}
\end{eqnarray*}
where $\alpha = \frac{1}{N} e^T w$ is a scalar. This shows that $w$ is a vector whose elements are all equal. Now, we show that $v\in \mathcal{N}(S_b)$. From (\ref{eq:SH-wbt}) and (\ref{eq:G-wbt}) we have

\begin{eqnarray*}
\begin{aligned}
S_b v &= X^T \frac{1}{N} \left(\sum_{c=1}^C \frac{1}{N_c} e^{(c)} e^{(c)^T} - \frac{1}{N} e e^T\right) X v \\
&= X^T \frac{1}{N} \left(\sum_{c=1}^C \frac{1}{N_c} e^{(c)} {e^{(c)^T}} - \frac{1}{N} e e^T\right) w \\
&= X^T \frac{\alpha}{N} \left(\sum_{c=1}^C \frac{1}{N_c} e^{(c)} e^{(c)^T}e - \frac{1}{N} e e^T e\right) \\
&= X^T \frac{\alpha}{N} \left(\sum_{c=1}^C \frac{1}{N_c} e^{(c)} N_c - \frac{N}{N}e\right) \\
&= X^T \frac{\alpha}{N} \left(\sum_{c=1}^C e^{(c)} - e\right) 
= X^T \frac{\alpha}{N} \left(e - e\right)= 0.
\end{aligned}
\end{eqnarray*}
This completes the proof.
\end{proof}

The following proposition shows that for $S_2=S_t$ and $S_1=S_b$ the objective function is always finite.

\begin{proposition}
\label{proposition:finiteness-of-obj}
For the choices $S_2 = S_t$ and $S_1=S_b$, the matrix $\Lambda_1$ in (\ref{eq:S1ALam1eqS2ALam2}) does not contain any zero diagonal entries.
\end{proposition}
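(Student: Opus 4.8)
The plan is to characterize exactly when a diagonal entry of $\Lambda_1$ can vanish and then invoke Lemma~\ref{lemma:nullspace-st-sb} to exclude this possibility. Working column by column is the natural approach, since both $\Lambda_1$ and $\Lambda_2$ are diagonal and (\ref{eq:S1ALam1eqS2ALam2}) decouples into independent scalar relations, one per generalized eigenvector.

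First I would fix an index $i$, write $a_i$ for the $i$-th column of $A$, and let $(\lambda_1)_i$ and $(\lambda_2)_i$ denote the corresponding diagonal entries of $\Lambda_1$ and $\Lambda_2$. With the choices $S_1 = S_b$ and $S_2 = S_t$, equation (\ref{eq:S1ALam1eqS2ALam2}) restricted to this column reads $(\lambda_1)_i\, S_b a_i = (\lambda_2)_i\, S_t a_i$, subject to the normalization $(\lambda_1)_i^2 + (\lambda_2)_i^2 = 1$.

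Next I would argue by contradiction. Suppose $(\lambda_1)_i = 0$. The normalization then forces $(\lambda_2)_i^2 = 1$, so $(\lambda_2)_i \neq 0$, and the column relation collapses to $S_t a_i = 0$, that is, $a_i \in \mathcal{N}(S_t)$. At this point Lemma~\ref{lemma:nullspace-st-sb} does the decisive work: since $\mathcal{N}(S_t) \subseteq \mathcal{N}(S_b)$, we also have $S_b a_i = 0$. Thus $a_i$ lies in the null space of both scatter matrices, so both sides of the column relation vanish identically for every admissible choice of $(\lambda_1)_i$ and $(\lambda_2)_i$. This is precisely the degenerate situation described before the proposition, in which Howland and Park's convention sets $(\lambda_2)_i = 0$; the normalization then yields $(\lambda_1)_i^2 = 1$, contradicting our assumption $(\lambda_1)_i = 0$. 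Hence no diagonal entry of $\Lambda_1$ can be zero.

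The main obstacle I expect is bookkeeping rather than any deep difficulty: one must verify that the only mechanism producing a zero in $\Lambda_1$ is a generalized eigenvector lying in $\mathcal{N}(S_t)$ but outside $\mathcal{N}(S_b)$, and that the tie-breaking convention applied to vectors in $\mathcal{N}(S_t)\cap\mathcal{N}(S_b)$ never creates such a zero. Lemma~\ref{lemma:nullspace-st-sb} shows that the set $\mathcal{N}(S_t)\setminus\mathcal{N}(S_b)$ is empty, so the finiteness of the objective value $tr\{\Lambda_1^{-2}\Lambda_2^2\}$ follows immediately.
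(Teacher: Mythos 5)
Your argument is correct and follows essentially the same route as the paper's own proof: a contradiction argument that uses the normalization $\lambda_1^2+\lambda_2^2=1$ to force $S_t a = 0$, invokes Lemma~\ref{lemma:nullspace-st-sb} to place $a$ in $\mathcal{N}(S_b)$ as well, and then appeals to the Howland--Park tie-breaking convention to rule out $\lambda_1=0$. Your version is, if anything, slightly more careful in noting only that $(\lambda_2)_i\neq 0$ rather than fixing its sign, but the substance is identical.
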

\begin{proof}
Assume on the contrary that in (\ref{eq:S1ALam1eqS2ALam2}), a diagonal entry $\lambda_1$ of $\Lambda_1$ is zero and its associated value in $\Lambda_2$ is $\lambda_2$ and its associated generalized eigenvector in $A$ is $a$.
Since $\lambda_1=0$ and $\lambda_1^2+\lambda_2^2=1$, the value of $\lambda_2$ is $1$. From (\ref{eq:S1ALam1eqS2ALam2}) we have
\begin{eqnarray*}
\lambda_1 S_b a = \lambda_2 S_t a = 0.
\end{eqnarray*}
Therefore, $a\in \mathcal{N}(S_t)$ and by lemma~\ref{lemma:nullspace-st-sb}, it follows that $a\in \mathcal{N}(S_b)$.
Consequently, as proposed by Howland and Park \cite{howland2004generalizing}, since $a$ is in the null space of both $S_b$ and $S_t$, we should have chosen $\lambda_1=1$ and $\lambda_2=0$ to exclude this useless vector from the solution.
Therefore, the original assumption that an entry of $\Lambda_1$ is zero is false and the proposition is proved.
\end{proof}

It can be shown that Proposition~\ref{proposition:finiteness-of-obj} does not hold for $S_2=S_w$.
Since we are ensured that for $S_1=S_b$ and $S_2=S_t$, the matrix $\Lambda_1$ is invertible, we can rewrite (\ref{eq:S1ALam1eqS2ALam2}) as 
\begin{eqnarray*}
S_1 A = S_2 A (\Lambda_2\Lambda_1^{-1}) = S_2 A \Lambda,
\end{eqnarray*}
which reduces (\ref{eq:S1ALam1eqS2ALam2}) to (\ref{eq:generalized-eigen}).
In the HD/SSS setting, the scatter matrix $S_t$ in the input space is singular. 
However, even in this case the matrix $A^T S_t A$, which is the total scatter matrix in the feature space, is almost always non-singular.
The reason is that, since each cluster contains at least one sample, the number of extracted features (which is at most $C-1$) cannot be more than the number of samples. 
In other words, viewed in the feature space, all problems are of LDLSS type.
In our experiments, we never observed even a single case in which the matrix $S_t$ in the feature space had become singular.
However, the criterion $tr\{(A^T S_t A)^{-1} A^T S_b A\}$ has the drawback that it is not defined when the matrix $A$ is badly chosen and the scatter matrix $A^T S_t A$ is singular.
To solve this problem, in this paper, we define the objective function $J(A)=tr\left\{(A^T S_t A)^{\dagger} A^T S_b A\right\}$. 
Interestingly, this criterion had been introduced and chosen by Ye \cite{ye2005characterization} as the preferred objective function for HD/SSS setting.

What we want to add here is that although the use of pseudoinverse is necessary from a theoretical point of view, however, the total-scatter matrix formed in the feature space is non-singular and the criticism made by Juefei-Xu and Savvides \cite{juefei2016multi}, that the use of pseudoinverse leads to approximate solutions,  is not true.
On the other hand, as there is a possibility that $S_w$ becomes singular in the feature space, using the pseudoinverse of $S_w$, as was proposed by Cao et al.\cite{Cao2019218}, may actually change the objective function
$tr\left\{(A^T S_t A)^{-1} A^T S_b A\right\}$.

\section{Proposed Method: LDA++}\label{sec:proposed-method}
In this section, we introduce our novel method for solving LDA, which we call LDA++.
First, in Section~\ref{sec:prop-speculating-solution}, we speculate a novel solution to LDA which does not involve solving an eigensystem.  
Then, in Section~\ref{sec:fisher-opt}, in a separate theorem, we prove that the speculated solution actually optimizes the objective function of LDA. 
In Section~\ref{sec:fisher-opt}, we proceed to prove, in another theorem, that by throwing away any one of the features, the remaining $C-1$ features also optimize the  objective function of LDA.  
In Section~\ref{sec:prop-lda-num-alg}, in parallel to subsection~\ref{sec:lda-num-alg}, we propose two numerical algorithms for LDLSS and HD/SSS settings.
Finally, we propose a simpler numerical solution to kernel LDA in Section~\ref{sec:prop-lda-kernel}. 
\subsection{Speculating a solution}
\label{sec:prop-speculating-solution}
In Section~\ref{sec:lda-kernel}, we stated that Baudat and Anouar \cite{baudat2000generalized} finally arrived at the eigensystem (\ref{eq:lda-kernel-problem}).
In this section, we assume that the kernel function $k(x,z)=(x-\mu)^T (z-\mu)$ is used so that the analysis comes back to the input space and the kernel matrix becomes $K=(I-\frac{1}{N}ee^T)XX^T(I-\frac{1}{N}ee^T)$. 
The subtraction of mean $\mu$ is necessary since (\ref{eq:lda-kernel-problem}) is derived under the assumption that data are centered in the feature space of the kernel function.
For this kernel function, we have the explicit feature map $\Phi(x)=x-\mu$.
Defining $\beta = K\alpha$, and eliminating the leading $K$ from both sides of (\ref{eq:lda-kernel-problem}), we arrive at 
\begin{eqnarray}
\label{eq:eigval-one}
W\beta = \lambda \beta.
\end{eqnarray}

We notice that $W$ is a projection matrix and consequently all of its eigenvalues are either $0$ or $1$. 
Without any computation, unnormalized eigenvectors of $W$ with eigenvalue $1$ can be readily figured out.
The eigenvectors of $W$ with eigenvalue $1$, i.e. the solutions $\beta$ of (\ref{eq:eigval-one}), are $e^{(1)},...,e^{(C)}$. We now solve the equation $\beta = K\alpha$ for $\alpha$. For $\beta=e^{(c)}$ we obtain
$e^{(c)} = (I-\frac{1}{N}ee^T)XX^T(I-\frac{1}{N}ee^T) \alpha^{(c)}$, where 
$c\in\left\{1,...,C\right\}$.
Since we are only speculating a solution, without mathematical rigor, we project the two sides of the equation on the centered training samples and proceed as follows:

\begin{eqnarray}
\label{eq:kernel-expansion-reslut}
\begin{aligned}
&X^T(I-\frac{1}{N}ee^T) e^{(c)} = X^T(I-\frac{1}{N}ee^T) XX^T(I-\frac{1}{N}ee^T) \alpha^{(c)} \\
&\Rightarrow  N_c (\mu_c - \mu) = N S_t X^T(I-\frac{1}{N}ee^T) \alpha^{(c)} \\
&\Rightarrow  \frac{N_c}{N} (\mu_c - \mu) = S_t (X^T-\mu e^T) \alpha^{(c)} \\
&\Rightarrow  (X^T-\mu e^T) \alpha^{(c)} = \frac{N_c}{N} S_t^\dagger (\mu_c - \mu),
\end{aligned}
\end{eqnarray}
where we have used the fact that $(I-\frac{1}{N}ee^T)$ is a projection matrix and therefore it is equal to its square.
The eigenvector $\phi_c$ in the feature space is
\begin{eqnarray*}
\begin{aligned}
\phi_c&=\sum_{n=1}^N \alpha^{(c)}_n \phi(x_n) = \sum_{n=1}^N \alpha^{(c)}_n (x_n-\mu) \\
&=(X^T-\mu e^T) \alpha^{(c)} = \frac{N_c}{N} S_t^\dagger (\mu_c - \mu),
\end{aligned}
\end{eqnarray*}
where the last equality was obtained in (\ref{eq:kernel-expansion-reslut}).
Therefore, we guess that one matrix $A$ that maximizes the objective function $tr\left\{(A^T S_t A)^{\dagger} A^T S_b A\right\}$ is 
\begin{eqnarray*}
\begin{aligned}
A &=[\phi_1,\phi_2,...,\phi_C] = S_t^\dagger M P,
\end{aligned}
\end{eqnarray*}
where $M=[\mu_1-\mu,...,\mu_C-\mu]$ and $P$ is a diagonal matrix with entries $\frac{N_1}{N},...,\frac{N_C}{N}$.
In contrast to EIG-LDA which consists of $C-1$ vectors, this solution contains $C$ vectors, each corresponding to a cluster.

\subsection{Proving the optimality of the solution}
\label{sec:fisher-opt}
In this section, we prove that the guess of the previous section is true. First, we note that the objective function is invariant to nonsingular transformations in the feature space \citep{fukunaga1990statistical}. Since $P$ is nonsingular, it suffices to prove the optimality of $A=S_t^\dagger M$.

\begin{theorem}
 $A=S_t^\dagger M$ optimizes the objective function $tr\left\{(A^T S_t A)^{\dagger} A^T S_b A\right\}$. 
\end{theorem}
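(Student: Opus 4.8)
The plan is to prove optimality in two independent pieces: first establish a universal upper bound $J(A)\le tr\{S_t^\dagger S_b\}$ valid for every $A$, and then show that the candidate $A=S_t^\dagger M$ attains exactly this value. Since the objective is invariant under right-multiplication of $A$ by a nonsingular matrix, it depends only on the column space of $A$; this is what lets me compare an arbitrary $A$ against the candidate on equal footing, and it is also what already justified dropping the diagonal factor $P$ and working with $A=S_t^\dagger M$.

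For the attainment half I would substitute $A=S_t^\dagger M$ directly. Using the Moore--Penrose identity $S_t^\dagger S_t S_t^\dagger = S_t^\dagger$ gives $A^T S_t A = M^T S_t^\dagger M$, and writing the between-cluster scatter as $S_b = M P M^T$ (which is merely the definition of $S_b$ repackaged) gives $A^T S_b A = (M^T S_t^\dagger M)\,P\,(M^T S_t^\dagger M)$. Setting $Q = M^T S_t^\dagger M$, the objective becomes $tr\{Q^\dagger Q P Q\}$. The delicate point is that $Q$ is \emph{singular}: the columns of $M$ satisfy $\sum_c N_c(\mu_c-\mu)=0$, so $\mathrm{rank}(M)\le C-1$ and $Q$ is a singular symmetric $C\times C$ matrix. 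I would resolve this by noting that $\Pi = Q^\dagger Q$ is the orthogonal projector onto the range of $Q$ and that $Q\Pi = Q Q^\dagger Q = Q$; hence $tr\{Q^\dagger Q P Q\} = tr\{\Pi P Q\} = tr\{PQ\}$, the last equality because $tr\{(I-\Pi)PQ\} = tr\{PQ(I-\Pi)\} = tr\{P\,(Q-Q\Pi)\}=0$ by trace cyclicity. Finally $tr\{PQ\} = tr\{P M^T S_t^\dagger M\} = tr\{S_t^\dagger M P M^T\} = tr\{S_t^\dagger S_b\}$, so the candidate attains the claimed value.

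For the upper bound I would whiten the problem with the symmetric (pseudo)square root of $S_t$. Because Lemma~\ref{lemma:nullspace-st-sb} gives $\mathcal{N}(S_t)\subseteq\mathcal{N}(S_b)$, the components of the columns of $A$ lying in $\mathcal{N}(S_t)$ affect neither $A^T S_t A$ nor $A^T S_b A$, so I may assume the columns of $A$ lie in the range of $S_t$, where $S_t$ is invertible. Putting $\tilde A = S_t^{1/2} A$ and $\tilde S_b = (S_t^{1/2})^\dagger S_b (S_t^{1/2})^\dagger$, a short computation using the orthogonal projector $P_{\tilde A} = \tilde A(\tilde A^T\tilde A)^\dagger \tilde A^T$ turns the objective into $J(A) = tr\{P_{\tilde A}\,\tilde S_b\}$. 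Since $\tilde S_b\succeq 0$ and $I-P_{\tilde A}$ is a projector, $tr\{\tilde S_b\} - tr\{P_{\tilde A}\,\tilde S_b\} = tr\{(I-P_{\tilde A})\,\tilde S_b\,(I-P_{\tilde A})\}\ge 0$, giving $J(A)\le tr\{\tilde S_b\} = tr\{(S_t^{1/2})^\dagger S_b (S_t^{1/2})^\dagger\} = tr\{S_t^\dagger S_b\}$, which matches the value attained in the previous paragraph and completes the proof.

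I expect the main obstacle to be purely the pseudoinverse bookkeeping in the singular regime rather than any deep idea. Two spots need genuine care: the singularity of $Q = M^T S_t^\dagger M$, handled above by the projector identity $Q Q^\dagger Q = Q$; and the legitimacy of the whitening change of variables when $S_t$ itself is singular (the HD/SSS setting), which hinges entirely on Lemma~\ref{lemma:nullspace-st-sb} to guarantee that $S_b$ has no component outside the range of $S_t$ and that $(S_t^{1/2})^\dagger(S_t^{1/2})^\dagger = S_t^\dagger$. Once these identities are secured, the remaining manipulations are routine applications of trace cyclicity.
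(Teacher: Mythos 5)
Your proof is correct, and its first half—substituting $A=S_t^\dagger M$, writing $S_b$ as $M$ times a diagonal weight matrix times $M^T$, and collapsing the resulting expression to $tr\{S_t^\dagger S_b\}$ via the identity $QQ^\dagger Q = Q$ and trace cyclicity—is essentially identical to the paper's computation (your projector argument $tr\{\Pi P Q\}=tr\{PQ\}$ is just a repackaging of the paper's chain of $\stackrel{\dagger}{=}$ and $\stackrel{\Circlearrowright}{=}$ steps). Where you genuinely diverge is the second half: the paper simply asserts that $tr\{S_t^\dagger S_b\}$ is ``the objective value of the original space without any dimensionality reduction which is clearly the largest possible maximum,'' whereas you actually prove the bound $J(A)\le tr\{S_t^\dagger S_b\}$ for arbitrary $A$ by whitening with $(S_t^{1/2})^\dagger$, rewriting $J(A)=tr\{P_{\tilde A}\tilde S_b\}$ with $P_{\tilde A}$ an orthogonal projector, and invoking positive semidefiniteness of $\tilde S_b$. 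Your use of Lemma~\ref{lemma:nullspace-st-sb} to dispose of the null-space components of $A$ and to justify $S_t^{1/2}(S_t^{1/2})^\dagger S_b = S_b$ is exactly the right way to make the whitening legitimate in the singular (HD/SSS) regime, and the identity $(S_t^{1/2})^\dagger(S_t^{1/2})^\dagger = S_t^\dagger$ holds for symmetric positive semidefinite matrices as you need. In short, your argument is a strict superset of the paper's: it supplies the rigorous upper-bound step that the paper leaves implicit, at the cost of some extra pseudoinverse bookkeeping.
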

\begin{proof}
We note that $S_b$ can be written as

\begin{equation}
\label{eq:SbMQMT}
\begin{aligned}
S_b &= \sum_{c=1}^C \frac{N_c}{N} \left(\mu_i-\mu\right) \left(\mu_i-\mu\right)^T \\
    &= \frac{1}{N} M \begin{bmatrix} N_1 & 0 & \\ & \ddots & \\ & 0 & N_C \end{bmatrix}	M^T \\
	&= M Q M^T,
\end{aligned}
\end{equation}
where $Q=\frac{1}{N} \begin{bmatrix} N_1 & & \\ & \ddots & \\ & & N_C \end{bmatrix}$.
Now, we show that $A=S_t^\dagger M$ optimizes the objective function. We have
\begin{equation*}
\begin{aligned}
  &tr\left\{(A^T S_t A)^{\dagger} A^T S_b A\right\} \\
= &tr\left\{(M^T S_t^\dagger S_t S_t^\dagger M)^{\dagger} M ^T S_t^\dagger S_b S_t^\dagger M\right\} \\
\stackrel{\dagger}{=} &tr\left\{(M^T S_t^\dagger M)^{\dagger} M^T S_t^\dagger S_b S_t^\dagger M\right\} \\
\stackrel{(\ref{eq:SbMQMT})}{=} &tr\left\{(M^T S_t^\dagger M)^{\dagger} M^T S_t^\dagger M Q M^T S_t^\dagger M\right\} \\
\stackrel{\Circlearrowright}{=} &tr\left\{(M^T S_t^\dagger M)(M^T S_t^\dagger M)^{\dagger} (M^T S_t^\dagger M) Q\right\} \\
\stackrel{\dagger}{=} &tr\left\{(M^T S_t^\dagger M) Q\right\} \\
\stackrel{\Circlearrowright}{=} &tr\left\{S_t^\dagger M Q M^T\right\} \\
\stackrel{(\ref{eq:SbMQMT})}{=} &tr\left\{S_t^\dagger S_b\right\},
\end{aligned}
\end{equation*}
where we have used 1) the key property of the pseudoinverse that for any matrix $A$, $AA^\dagger A = A$, denoted by $\stackrel{\dagger}{=}$, and 2) the cyclic property of trace that $tr\{ABC\}=tr\{CAB\}$, denoted by $\stackrel{\Circlearrowright}{=}$.
The value $tr\left\{S_t^\dagger S_b\right\}$ is the objective value of the original space without any dimensionality reduction which is clearly the largest possible maximum.
Therefore, $A=S_t^\dagger M$ is an optimal solution.
\end{proof}

It may be argued that this solution is not optimal as it extracts $C$ features instead of $C-1$.
In the next theorem we prove that any subset of columns of $A$ containing $C-1$ vectors also optimizes the objective function of LDA.

\begin{theorem}
Any $C-1$ columns of the matrix $A=S_t^\dagger M$ also optimize the objective function $tr\left\{(A^T S_t A)^{\dagger} A^T S_b A\right\}$. 
\label{thm:C-1-solution}
\end{theorem}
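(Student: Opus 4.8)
The plan is to exploit the single linear dependency that always exists among the columns of $M=[\mu_1-\mu,\dots,\mu_C-\mu]$. Since $\sum_{c=1}^C N_c\mu_c=\sum_{n=1}^N x_n=N\mu$, the cluster sizes furnish a null combination $\sum_{c=1}^C N_c(\mu_c-\mu)=0$, i.e. $Md=0$ with $d=(N_1,\dots,N_C)^T$ and every $N_c>0$. Consequently the column discarded in passing to $C-1$ features is automatically a linear combination of the retained ones. Reordering the clusters so that the discarded column is the last one (the objective is invariant under a simultaneous permutation of the columns of $A$, since a permutation matrix is nonsingular), and writing $\tilde M=[\mu_1-\mu,\dots,\mu_{C-1}-\mu]$, I obtain the factorization $M=\tilde M T$ with $T=\big[\,I_{C-1}\ \big|\ -\tfrac{1}{N_C}(N_1,\dots,N_{C-1})^T\,\big]$ a $(C-1)\times C$ matrix. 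This identity is the engine of the whole argument.

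The reduced feature matrix is $\tilde A=S_t^\dagger\tilde M$, and I would evaluate $tr\{(\tilde A^T S_t\tilde A)^{\dagger} \tilde A^T S_b\tilde A\}$ exactly as in Theorem~1, but now tracking the smaller Gram matrix. Using $S_t^\dagger S_t S_t^\dagger=S_t^\dagger$ gives $\tilde A^T S_t\tilde A=\tilde M^T S_t^\dagger\tilde M=:\tilde P$, while substituting $S_b=MQM^T$ from (\ref{eq:SbMQMT}) together with $M=\tilde M T$ yields
\begin{equation*}
\tilde A^T S_b\tilde A=\tilde M^T S_t^\dagger M\,Q\,M^T S_t^\dagger\tilde M=(\tilde P T)\,Q\,(\tilde P T)^T .
\end{equation*}
Hence the objective value at the reduced matrix is $tr\{\tilde P^{\dagger}\,\tilde P T Q T^T\tilde P\}$.

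The final step is a collapse in the same spirit as Theorem~1: by the cyclic property of the trace I move one factor $\tilde P$ to the front and apply the pseudoinverse identity $\tilde P\tilde P^{\dagger}\tilde P=\tilde P$, obtaining $tr\{\tilde P T Q T^T\}=tr\{T^T\tilde P T\,Q\}$. Because $T^T\tilde P T=T^T\tilde M^T S_t^\dagger\tilde M T=M^T S_t^\dagger M$, this equals $tr\{M^T S_t^\dagger M\,Q\}=tr\{S_t^\dagger M Q M^T\}=tr\{S_t^\dagger S_b\}$, the global maximum already identified in Theorem~1, which proves optimality of any $C-1$ columns. I expect the main obstacle to be conceptual rather than computational: unlike Theorem~1, the pseudoinverse is now taken of the $(C-1)\times(C-1)$ matrix $\tilde P$ rather than of the full $P=M^T S_t^\dagger M$, so the cancellation cannot be obtained by directly matching $S_b$ against the Gram term. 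The factorization $M=\tilde M T$ is precisely what bridges this gap, routing $Q$ (which lives in the full $C$-dimensional cluster-index space through $S_b=MQM^T$) back through $T$ so that $\tilde P\tilde P^{\dagger}\tilde P=\tilde P$ can legitimately be applied.
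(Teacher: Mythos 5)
Your proof is correct and takes essentially the same route as the paper: both arguments hinge on the linear dependency $\sum_{c=1}^C N_c(\mu_c-\mu)=0$ to express the discarded column in terms of the retained ones, then collapse the objective to $tr\{S_t^\dagger S_b\}$ via the pseudoinverse identity and the cyclic property of the trace. Your multiplicative factorization $M=\tilde{M}T$ merely repackages the paper's additive split $S_b=\hat{M}\hat{Q}\hat{M}^T+\frac{N_C}{N}\hat{M}\hat{N}\hat{N}^T\hat{M}^T$ into a single term (indeed $TQT^T$ is exactly the paper's $\tilde{Q}=\hat{Q}+\frac{N_C}{N}\hat{N}\hat{N}^T$), so you perform one trace collapse where the paper performs two.
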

\begin{proof}
Without loss of generality, assume that we have removed the $C$-th column. 
Define  
\begin{eqnarray*}
\hat{M}=[\mu_1-\mu,...,\mu_{C-1}-\mu] \\
\hat{Q}=\frac{1}{N} \begin{bmatrix} N_1 & & \\ & \ddots & \\ & & N_{C-1} \end{bmatrix} \\
\hat{N}=\left[\frac{N_1}{N_C},...,\frac{N_{C-1}}{N_C}\right]^T.
\end{eqnarray*}
We have
\begin{align}
&S_b = M Q M^T = \hat{M} \hat{Q} \hat{M}^T + \frac{N_C}{N} (\mu_C-\mu) (\mu_C-\mu)^T \label{eq:sb-based-on-Q-hat}\\
&\mu_C -\mu = -\frac{1}{N_C} \sum_{c=1}^{C-1} N_c (\mu_c-\mu) = -\hat{M} \hat{N},
\label{eq:mu-c-mu}
\end{align}
where the second equation is derived from 
\begin{eqnarray*}
\sum_{c=1}^C N_c \mu_c = N\mu = \left(\sum_{c=1}^C N_c\right)\mu =
\sum_{c=1}^C N_c\mu .
\end{eqnarray*}
Now, we compute the objective function for $\hat{A}=S_t^\dagger \hat{M}$:
\begin{equation*}
\begin{aligned}
  &tr\left\{(\hat{A}^T S_t \hat{A})^{\dagger} \hat{A}^T S_b \hat{A}\right\} \\
= &tr\left\{(\hat{M}^T S_t^\dagger S_t S_t^\dagger \hat{M})^{\dagger} \hat{M}^T S_t^\dagger S_b S_t^\dagger \hat{M}\right\} \\
\stackrel{\dagger}{=} &tr\left\{(\hat{M}^T S_t^\dagger \hat{M})^{\dagger} \hat{M}^T S_t^\dagger S_b S_t^\dagger \hat{M}\right\} \\
\stackrel{(\ref{eq:sb-based-on-Q-hat})}{=} &tr\left\{(\hat{M}^T S_t^\dagger \hat{M})^{\dagger} \hat{M}^T S_t^\dagger \hat{M} \hat{Q} \hat{M}^T S_t^\dagger \hat{M}\right\}\\ 
& + \frac{N_C}{N} 
tr\left\{(\hat{M}^T S_t^\dagger \hat{M})^{\dagger} \hat{M}^T S_t^\dagger \hat{M} \hat{N}\hat{N}^T \hat{M}^T S_t^\dagger \hat{M}\right\}\\
\stackrel{\Circlearrowright,\dagger}{=} &tr\left\{\hat{M}^T S_t^\dagger \hat{M} \hat{Q}\right\} 
+ \frac{N_C}{N} tr\left\{\hat{M}^T S_t^\dagger \hat{M} \hat{N}\hat{N}^T\right\}\\
\stackrel{\Circlearrowright}{=} &tr\left\{S_t^\dagger \hat{M} \hat{Q} \hat{M}^T\right\} 
+ \frac{N_C}{N} tr\left\{S_t^\dagger \hat{M} \hat{N}\hat{N}^T \hat{M}^T\right\}\\
\stackrel{(\ref{eq:mu-c-mu})}{=} &tr\left\{S_t^\dagger \hat{M} \hat{Q} \hat{M}^T\right\} + \frac{N_C}{N} 
tr\left\{S_t^\dagger (\mu_C-\mu) (\mu_C-\mu)^T\right\}\\
\stackrel{(\ref{eq:sb-based-on-Q-hat})}{=} &tr\left\{S_t^\dagger M Q M^T\right\} \\
\stackrel{(\ref{eq:sb-based-on-Q-hat})}{=} &tr\left\{S_t^\dagger S_b\right\},
\end{aligned}
\end{equation*}
where again we have used 1) the key property of the pseudoinverse that for any matrix $A$, $AA^\dagger A = A$, denoted by $\stackrel{\dagger}{=}$, and 2) the cyclic property of trace that $tr\{ABC\}=tr\{CAB\}$, denoted by $\stackrel{\Circlearrowright}{=}$.
This completes the proof since $tr\left\{S_t^\dagger S_b\right\}$ is the maximum attainable objective value and therefore $\hat{A}$ is optimal.
\end{proof}

\subsection{Proposed Numerical Algorithms}
\label{sec:prop-lda-num-alg}
In Section~\ref{sec:lda-num-alg}, we reviewed the classical algorithms for solving LDA in LDLSS and HD/SSS settings.
In this section, we propose two numerical algorithms for computing the solution $S_t^\dagger M$ for the same settings. In LDLSS setting, we can compute the matrix $S_t$ explicitly and then find the least squares solution to the equation $S_t A = M$ as shown in Algorithm~\ref{alg:prop-lda-low-dim}.
For HD/SSS setting, the classical Algorithm~\ref{alg:lda-svd} uses two SVDs to solve LDA. 
Assuming that $H_t=U\Sigma V^T$ is the reduced SVD of $H_t$, we have
\begin{eqnarray*}
\begin{aligned}
S_t^\dagger M = \left(H_t^T H_t\right)^\dagger M 
= \left(V \Sigma^2 V^T\right)^\dagger M 
= V \Sigma^{-2} V^T M.
\end{aligned}
\end{eqnarray*}

Our  method for the HD/SSS setting solves LDA with only one SVD, as shown in Algorithm~\ref{alg:prop-lda-high-dim}. 
However, as our experiments in Section~\ref{sec:experiments-timing} show, the classical Algorithm~\ref{alg:lda-svd} and the proposed Algorithm~\ref{alg:prop-lda-high-dim} have almost identical computational complexity since Algorithm~\ref{alg:prop-lda-high-dim} engages in extra matrix multiplications.

\begin{algorithm}
\caption{Proposed algorithm for LDLSS setting}
\label{alg:prop-lda-low-dim}
\begin{algorithmic}[1]
\STATE Compute $S_t$ explicitly.
\STATE {Construct matrix $M$ whose columns are the means of clusters minus the total mean.}
\STATE Find the least squares solution to $S_t A = M$. 
\RETURN A
\end{algorithmic}
\end{algorithm}

\begin{algorithm}
\caption{Proposed algorithm for HD/SSS setting}
\label{alg:prop-lda-high-dim}
\begin{algorithmic}[1]
\STATE {Compute $H_t$} using (\ref{eq:H-wbt}).
\STATE {Construct matrix $M$ whose columns are the means of clusters minus the total mean.}
\STATE {Compute the reduced SVD to obtain $H_t = U \Sigma V^T$.}
\COMMENT {Eigen-decomposition of $S_t$ is $S_t = V \Sigma^2 V^T$}
\STATE {$A = V \Sigma^{-2} V^T M$}
\RETURN {A}
\end{algorithmic}
\end{algorithm}

\subsection{ Proposed algorithm for Kernel LDA}
\label{sec:prop-lda-kernel}
In this section, we introduce a simpler solution to kernel LDA.
In Section~\ref{sec:lda-kernel}, we stated that the main problem of kernel LDA is (\ref{eq:lda-kernel-problem}), which Baudat and Anouar \cite{baudat2000generalized} further simplified it to 
\begin{eqnarray*}
WK\alpha = \lambda K\alpha.
\end{eqnarray*}
By defining $\beta = K\alpha$, we get
\begin{eqnarray}
\label{eq:W-eigenproblem}
W\beta = \lambda \beta.
\end{eqnarray}

As stated in Section~\ref{sec:prop-speculating-solution}, $W$ is a projection matrix with eigenvalues $0$ and $1$. The unnormalized eigenvectors corresponding to the $C$ eigenvalues with value $1$ are $e^{(1)},...,e^{(c)}$.
For $\beta=e^{(c)}$, where $c\in\left\{1,...,C\right\}$, we obtain the kernel expansion coefficients $\alpha^{(c)}=K^\dagger e^{(c)}$.
Algorithm~\ref{alg:prop-lda-kernel} is our proposed method for solving kernel LDA which is simpler than Algorithm~\ref{alg:lda-kernel}.
Like \citep{baudat2000generalized}, we then normalize the eigenvectors in the feature space by dividing $\alpha$ by $\alpha^T K \alpha$. 
Since $\alpha$ is obtained using pseudoinverse,  generally $\beta\neq K \alpha$ and consequently the eigenvalues of (\ref{eq:W-eigenproblem}) are not equal to those of (\ref{eq:lda-kernel-problem}). However, when the kernel function is strictly positive definite (e.g. RBF), then the matrix K is invertible and the two problems become equivalent. 
In Section~\ref{sec:experiments-iris}, we experimentally show that for strictly positive definite kernels all eigenvalues become $1$.

\begin{algorithm}
\caption{The proposed algorithm for kernel LDA}
\label{alg:prop-lda-kernel}
\begin{algorithmic}[1]
\STATE {Compute the kernel matrix $K$ of training data.}
\STATE {Compute the eigen-decomposition $K=U\Gamma U^T$.}
\FOR {$c=1,2,...,C$}
\STATE $\alpha^{(c)} = U\Gamma^{\dagger} U^T e^{(c)}$
\STATE {Divide $\alpha^{(c)}$ by $\sqrt{{\alpha^{(c)^T} K \alpha^{(c)}}}$ for normalization.}
\ENDFOR
\RETURN {$\alpha^{(1)},...,\alpha^{(C)}$}
\end{algorithmic}
\end{algorithm}

\section{Separating Dimensionality Reduction and Metric Changing}
\label{sec:sep-dim-red-and-metric-changing}
In this section, we show that the EIG-LDA solution is the composition of 1) a dimensionality reduction and 2) a metric changing transformation. We apply a similar metric-changing transformation to LDA++ and obtain EIG-LDA++ which is another optimal solution to LDA.
\subsection{Relation between EIG-LDA and LDA++}
\label{sec:EIG-LDA-LDA++-relation}
As we saw in Section~\ref{sec:lda-num-alg} and Algorithm~\ref{alg:lda-eigen}, an EIG-LDA solution $\tilde{A}$ is obtained by solving the following eigensystem:
\begin{eqnarray}
\begin{aligned}
S_b\tilde{A} = S_t \tilde{A} \Lambda.
\end{aligned}
\label{eq:eig-lda}
\end{eqnarray}
On the other hand, in Theorem~\ref{thm:C-1-solution}, we proved that the LDA++ solution with $C-1$ features $\hat{A}=S_t^\dagger \hat{M}$ is another optimal solution to LDA.
Assume that no optimal solution to LDA has less than $C-1$ features. Then, both $\tilde{A}$ and $\hat{A}$ are full column rank and there exists a non-singular $C-1\times C-1$ square matrix $\hat{Z}$ such that $\tilde{A}=\hat{A}\hat{Z}$. We want to relate EIG-LDA and LDA++ solutions by finding the matrix $\hat{Z}$. Firstly, using (\ref{eq:sb-based-on-Q-hat}) and (\ref{eq:mu-c-mu}), we rewrite $S_b$ as 
\begin{eqnarray}
\begin{aligned}
S_b = \hat{M}\hat{Q}\hat{M}^T + \frac{N_c}{N} \hat{M} \hat{N} \hat{N}^T \hat{M}^T = \hat{M} \tilde{Q} \hat{M}^T,
\end{aligned}
\label{eq:Sb-based-on-Q-tilde}
\end{eqnarray}
where 
\begin{eqnarray}
\begin{aligned}
\tilde{Q} = \hat{Q} + \frac{N_c}{N} \hat{N} \hat{N}^T.
\end{aligned}
\end{eqnarray}
Then, we substitute $S_b$ in (\ref{eq:eig-lda}) and get
\begin{eqnarray}
\begin{aligned}
\hat{M} \tilde{Q} \hat{M}^T \tilde{A} = S_t \tilde{A} \Lambda.
\end{aligned}
\end{eqnarray}
Substituting $\tilde{A}$ with $\hat{A}Z=S_t^\dagger\hat{M}\hat{Z}$ in the above equation we obtain
\begin{eqnarray}
\begin{aligned}
\hat{M} \tilde{Q} \hat{M}^T S_t^\dagger\hat{M}\hat{Z} = S_t S_t^\dagger\hat{M}\hat{Z} \Lambda.
\end{aligned}
\end{eqnarray}
Multiplying the above equation from the left by $S_t^\dagger$ we get
\begin{eqnarray}
\begin{aligned}
S_t^\dagger\hat{M} \tilde{Q} \hat{M}^T S_t^\dagger\hat{M}\hat{Z} = S_t^\dagger\hat{M}\hat{Z} \Lambda.
\end{aligned}
\end{eqnarray}
Since by assumption $\hat{A}=S_t^\dagger\hat{M}$ is full column rank, the above equation simplifies to the eigensystem
\begin{eqnarray}
\begin{aligned}
\tilde{Q} \hat{M}^T S_t^\dagger\hat{M}\hat{Z} = \hat{Z} \Lambda.
\end{aligned}
\label{eq:Z-hat-eigensystem}
\end{eqnarray}
Therefore, the EIG-LDA transformation $Y=\tilde{A}^T X=\hat{Z}^T \hat{A}^T X$ consists of two consecutive transformations. The first transformation is $\hat{A}$ which optimizes LDA, and the second transformation is $\hat{Z}$ which is a non-singular transformation without any effect of the LDA objective function. However, $\hat{Z}$ changes the metric of the feature space and shows its effect when classifying by the nearest neighbor classifier. 
\subsection{EIG-LDA++: LDA++ plus metric learning}
\label{sec:EIG-LDA++}
Learning from the lessons of the previous section, in this section, we add a non-singular metric-changing transformation $Z$ to the $C$-feature LDA++ solution $S_t^\dagger M$ to get $A=S_t^\dagger M Z$.
We propose to find the $C\times C$ matrix $Z$ by solving the following eigenvalue problem
\begin{eqnarray}
\begin{aligned}
Q M^T S_t^\dagger M Z = Z \Lambda,
\end{aligned}
\label{eq:EIG-LDA++}
\end{eqnarray}
which is an extension of (\ref{eq:Z-hat-eigensystem}) to $C\times C$ matrices. We call the solution $S_t^\dagger M Z$ as EIG-LDA++. In our implementation, we first solve for the LDA++ solution $A=S_t^\dagger M$ using either Algorithm~\ref{alg:prop-lda-low-dim} or Algorithm~\ref{alg:prop-lda-high-dim}, whichever appropriate. Then we left-multiply the solution by $QM^T$ to obtain $Q M^T S_t^\dagger M$. Then we solve for $Z$ in (\ref{eq:EIG-LDA++}) and multiply it by the LDA++ solution $A=S_t^\dagger M$. Note that (\ref{eq:EIG-LDA++}) is a $C$-dimensional equation that is solved very fast. In our experiments, we found that, after computing LDA++,  the additional time required to compute EIG-LDA++ is at most several milliseconds.

\section{New findings about classical LDA}
In this section, we mention some new findings about LDA and its relation with multiclass classification. 
In Section~\ref{sec:non-discriminative}, we analyze the optimal solution of LDA++ and discover some previously unknown facts about LDA.
In Section~\ref{sec:relation-with-classification}, we investigate the relation between linear discriminant dimensionality reduction and multiclass classification.
Considering the optimal solution to the classification problem for homoscedastic Gaussian data, given in Section~\ref{sec:bayes-linear-classifier}, we speculate another solution to LDA. We prove that, under some mild conditions, this solution to the problem of linear multiclass classification is also an optimal solution to LDA.
\label{sec:new-findings-lda}
\subsection{Non-discriminative feature-weighting property of LDA}
\label{sec:non-discriminative}
The fact that a solution to LDA only depends on the centers of clusters and the total scatter matrix is astonishing. This shows that, while the objective function of LDA is defined based on the within-class and the between-class scatter matrices, a solution has nothing to do with them and merely depends on the centers of clusters and the total scatter matrix. In other words, LDA simply measures the similarity of the input data with the centers of the clusters using a metric defined by the total scatter matrix, which does not depend on the discriminative task at hand. We call this problem \textit{non-discriminative feature-weighting} and explain this disappointing property of LDA with a real example in Section~\ref{sec:experiments-feret}.
In the extreme case that each training sample becomes a cluster on its own, LDA simplifies to the nearest neighbor algorithm with a metric defined by the total scatter matrix.

In classical pattern recognition textbooks LDA is contrasted against PCA for extracting discriminative, instead of maximum-variance, features.
Quite surprisingly, LDA++, which is an optimal solution to LDA, internally weights features based on PCA, as we now show. Assume that $S_t=\Phi\Lambda \Phi^T$ is the spectral decomposition of $S_t$, where $\Phi$ is the matrix of eigenvectors and $\Lambda$ is the diagonal matrix of eigenvalues. Consider the optimal solution $A=S_t^\dagger M$ for LDA and assume that we want to compute $A^T x$ for some data $x$. We have
\begin{eqnarray}
\begin{aligned}
A^T x &= (S_t^\dagger M)^T x = M^T S_t^\dagger x = M^T (\Phi\Lambda \Phi^T)^\dagger x \\
&= M^T \Phi \Lambda^\dagger \Phi^T x 
= (\Sigma \Phi^T M)^T (\Sigma \Phi^T x) ,
\end{aligned}
\label{eq:lda-as-pca}
\end{eqnarray}
where $\Sigma$ is a diagonal matrix such that $\Sigma^2 = \Lambda^\dagger$. This gives another interpretation for the functionality of LDA++: the features extracted by LDA++ are the dot-product between the weighted PCA features of the centers of clusters and the input data, the weight being the inverse of square root of eigenvalues.
We can also write (\ref{eq:lda-as-pca}) as
$(\Phi \Sigma \Phi^T M)^T (\Phi \Sigma \Phi^T x)$
which brings back data from the PCA space to the input space for visualization.

\subsection{Relation between LDA and multiclass classification}
\label{sec:relation-with-classification}
In Section~\ref{sec:bayes-linear-classifier}, we mentioned that for homoscedastic Gaussian data with nonsingular within-class covariance matrix $\Sigma_w$, the features $\Sigma_w^{-1}M$ are Bayes-optimal for classification. Considering that $S_w$ is the sample within-class covariance matrix, one may wonder if $A_w=S_w^\dagger M$ also optimizes the objective function $J(A)=tr\left\{(A^T S_t A)^{\dagger} A^T S_b A\right\}$? 
In general, $A_w$ is not a solution to LDA as will be shown experimentally by a counterexample in Section~\ref{sec:experiments-uci}. 
However, we will show that when the matrices $S_w$ and $\hat{M}^T S_w^{-1} \hat{M}$ are invertible, then $A_w=S_w^{-1} M$ is also an optimal solution to LDA. Now, we prove a useful lemma.

\begin{lemma}
\label{thm:invertibility}
Assume that the matrices $S_w$ and $R=\hat{M}^T S_w^{-1} \hat{M}$ are invertible and let $\hat{A}_w=\Sigma_w^{-1}\hat{M}$. Then $\hat{A}_w^T S_t \hat{A}_w$ is invertible.
\end{lemma}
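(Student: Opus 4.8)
The plan is to reduce $\hat{A}_w^T S_t \hat{A}_w$ to a sum of two symmetric matrices that are manifestly positive definite, so that their sum is positive definite and therefore invertible. (I read the definition as $\hat{A}_w = S_w^{-1}\hat{M}$, treating the $\Sigma_w$ in the statement as the sample matrix $S_w$, which is the reading consistent with the hypothesis $R=\hat{M}^T S_w^{-1}\hat{M}$.)

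First I would write $\hat{A}_w^T S_t \hat{A}_w = \hat{M}^T S_w^{-1} S_t S_w^{-1}\hat{M}$ and split $S_t = S_w + S_b$. The within-class part collapses at once, since $S_w^{-1} S_w S_w^{-1} = S_w^{-1}$, leaving exactly $R = \hat{M}^T S_w^{-1}\hat{M}$. For the between-class part I would substitute the factorization $S_b = \hat{M}\tilde{Q}\hat{M}^T$ established in (\ref{eq:Sb-based-on-Q-tilde}), which yields
\begin{equation*}
\hat{M}^T S_w^{-1}\hat{M}\,\tilde{Q}\,\hat{M}^T S_w^{-1}\hat{M} = R\tilde{Q}R .
\end{equation*}
Hence $\hat{A}_w^T S_t \hat{A}_w = R + R\tilde{Q}R$.

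It then remains to establish definiteness of each summand. Since $S_w = H_w^T H_w$ is positive semidefinite and is assumed invertible, it is positive definite, so $S_w^{-1}$ is positive definite and $R=\hat{M}^T S_w^{-1}\hat{M}$ is positive semidefinite; being invertible by hypothesis, $R$ is in fact positive definite. Next, $\tilde{Q} = \hat{Q} + \frac{N_C}{N}\hat{N}\hat{N}^T$ is positive definite, because $\hat{Q}=\frac{1}{N}\mathrm{diag}(N_1,\dots,N_{C-1})$ has strictly positive diagonal entries while $\frac{N_C}{N}\hat{N}\hat{N}^T$ is positive semidefinite. Consequently $R\tilde{Q}R = R^T\tilde{Q}R$ is positive definite, since for any $v\neq 0$ the invertibility of $R$ gives $Rv\neq 0$ and then $(Rv)^T\tilde{Q}(Rv)>0$. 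The sum of the two positive-definite matrices $R$ and $R\tilde{Q}R$ is positive definite, hence nonsingular, which is the claim.

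The calculation is routine; the only points needing care are using the factorization (\ref{eq:Sb-based-on-Q-tilde}) rather than $S_b=MQM^T$ (since one column of $M$ has been dropped, so $\tilde{Q}$ absorbs the rank-one correction from $\mu_C-\mu$), and the logical step upgrading $R$ from merely positive semidefinite to positive definite via the invertibility hypothesis, which is exactly where that hypothesis is needed. I do not expect any genuine obstacle beyond this bookkeeping.
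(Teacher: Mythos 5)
Your proof is correct, but it takes a genuinely different route from the paper's. The paper argues abstractly: it uses $S_t=S_w+S_b$ only to conclude that $S_t$ is strictly positive definite, deduces from the invertibility of $R$ that $\hat{M}$ (and hence $\hat{A}_w=S_w^{-1}\hat{M}$) is full column rank, and then observes that $x^T\hat{A}_w^T S_t\hat{A}_w x=\|S_t^{1/2}\hat{A}_w x\|^2$ cannot vanish for $x\neq 0$. You instead compute the matrix explicitly, splitting the quadratic form itself via $S_t=S_w+S_b$ and the factorization $S_b=\hat{M}\tilde{Q}\hat{M}^T$ to obtain $\hat{A}_w^T S_t\hat{A}_w=R+R\tilde{Q}R$, and then check positive definiteness of each summand; this avoids rank arguments and the matrix square root entirely, at the cost of invoking the $\tilde{Q}$ bookkeeping. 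Your identity has the incidental benefit that $R+R\tilde{Q}R$ is exactly the matrix whose inverse appears in the first display of the proof of Theorem~\ref{thm:SwPinvM-optimality}, so your version of the lemma directly certifies the invertibility used there; the paper's version establishes the same fact less explicitly. Your reading of $\Sigma_w$ as $S_w$ is also the one consistent with the hypotheses and with how the lemma is applied. Both proofs are sound; neither has a gap.
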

\begin{proof}Since $S_w$ is an invertible scatter matrix, it is strictly positive definite. Similarly, $S_b$ is a scatter matrix and therefore is positive semidefinite. It follows that their sum $S_t=S_w+S_b$ is strictly positive definite and therefore invertible.  
Since, by assumption, $\hat{M}^T S_w^{-1} \hat{M}$ is invertible, it follows that $\hat{M}$ is full column rank. Therefore, $\hat{A}_w=\Sigma_w^{-1}\hat{M}$ is the product of a full rank matrix with a full column rank matrix and consequently is full column rank. 
If $\hat{A}_w^T S_t \hat{A}_w$ is not invertible, then there exists a vector $x$ such that $\hat{A}_w^T S_t \hat{A}_w x=0$. Consequently, $x^T \hat{A}_w^T S_t \hat{A}_w x=\|S_t^{1/2} \hat{A}_w x\|=0$. However, this is impossible since both $S_t^{1/2}$ and $\hat{A}_w$ are full column rank. It follows that $\hat{A}_w^T S_t \hat{A}_w$ should be invertible.
\end{proof}

We first prove the optimality of $\hat{A}_w=\Sigma_w^{-1}\hat{M}$ and later conclude that $A_w=S_w^{-1}M$ is also optimal.
\begin{theorem}
\label{thm:SwPinvM-optimality}
If the matrices $S_w$ and $R=\hat{M}^T S_w^{-1} \hat{M}$ are invertible, then $\hat{A}_w=\Sigma_w^{-1}\hat{M}$ is an optimal solution to the objective function $J(A)=tr\{(A^T S_t A)^\dagger A^T S_b A\}$.
\end{theorem}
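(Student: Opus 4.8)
The plan is to exploit the fact, already used in the proofs of the preceding theorems, that the objective $J(A)=tr\{(A^T S_t A)^{\dagger} A^T S_b A\}$ is invariant under right-multiplication of $A$ by any nonsingular matrix, and therefore depends on $\hat{A}_w$ only through its column space. Under the stated hypotheses the situation simplifies considerably: as noted in the proof of Lemma~\ref{thm:invertibility}, $S_t=S_w+S_b$ is strictly positive definite, so $S_t^{\dagger}=S_t^{-1}$ and the maximum attainable value of $J$ equals $tr\{S_t^{-1}S_b\}$, which by Theorem~\ref{thm:C-1-solution} is already attained by the $(C-1)$-column matrix $\hat{A}=S_t^{-1}\hat{M}$. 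Hence it suffices to show that $\hat{A}_w=S_w^{-1}\hat{M}$ and $\hat{A}=S_t^{-1}\hat{M}$ share the same column space; then $\hat{A}_w=\hat{A}Z$ for some nonsingular $(C-1)\times(C-1)$ matrix $Z$, and invariance of $J$ gives $J(\hat{A}_w)=J(\hat{A})=tr\{S_t^{-1}S_b\}$, the maximum.

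The key step will be the column-space identity $S_w^{-1}\,\mathrm{range}(\hat{M})=S_t^{-1}\,\mathrm{range}(\hat{M})$. The computation I would carry out uses the factorization $S_b=\hat{M}\tilde{Q}\hat{M}^T$ from (\ref{eq:Sb-based-on-Q-tilde}), which guarantees $\mathrm{range}(S_b)\subseteq\mathrm{range}(\hat{M})$. Taking any $v\in\mathrm{range}(\hat{M})$ and writing $S_t S_w^{-1}v=(S_w+S_b)S_w^{-1}v=v+S_b S_w^{-1}v$, both summands lie in $\mathrm{range}(\hat{M})$: the first by assumption and the second because $S_b S_w^{-1}v=\hat{M}\tilde{Q}\hat{M}^T S_w^{-1}v$. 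Thus $S_w^{-1}v\in S_t^{-1}\,\mathrm{range}(\hat{M})$, which yields the inclusion $S_w^{-1}\,\mathrm{range}(\hat{M})\subseteq S_t^{-1}\,\mathrm{range}(\hat{M})$. Since $R=\hat{M}^T S_w^{-1}\hat{M}$ is invertible, $\hat{M}$ has full column rank $C-1$, and because $S_w^{-1}$ and $S_t^{-1}$ are nonsingular, both subspaces have dimension $C-1$; a dimension count then upgrades the inclusion to equality.

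Having matched the column spaces, I would close the argument by invoking the invariance of $J$ together with Lemma~\ref{thm:invertibility}, which ensures $\hat{A}_w^T S_t\hat{A}_w$ is invertible so that the pseudoinverse appearing in $J(\hat{A}_w)$ is a genuine inverse and the invariance manipulation is legitimate. I expect the main obstacle to be the clean verification of the column-space identity rather than anything deep: one must be careful that the inclusion holds between spaces of equal finite dimension so that it becomes an equality, and that full column rank of $\hat{M}$ is correctly extracted from invertibility of $R$. As an alternative, more computational route, I would instead substitute $\hat{A}_w=S_w^{-1}\hat{M}$ directly, reduce $J(\hat{A}_w)$ to $tr\{(R^{-1}+\tilde{Q})^{-1}\tilde{Q}\}$ using $R=\hat{M}^T S_w^{-1}\hat{M}$, and then apply the Woodbury identity to $S_t=S_w+\hat{M}\tilde{Q}\hat{M}^T$ to show $\hat{M}^T S_t^{-1}\hat{M}=(R^{-1}+\tilde{Q})^{-1}$, which identifies this trace with $tr\{S_t^{-1}S_b\}=tr\{\tilde{Q}\,\hat{M}^T S_t^{-1}\hat{M}\}$; this avoids the geometric argument at the cost of a slightly longer algebraic manipulation.
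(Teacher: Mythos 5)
Your proposal is correct, but your primary route is genuinely different from the paper's. The paper proves the theorem by brute force: it substitutes $\hat{A}_w=S_w^{-1}\hat{M}$ into $J$, uses the invertibility of $R$ to reduce the objective to $tr\{(R^{-1}+\tilde{Q})^{-1}\tilde{Q}\}$, and then applies the Sherman--Morrison--Woodbury formula twice followed by a long chain of trace and substitution manipulations to recover $tr\{S_t^{-1}S_b\}$. You instead reduce the theorem to the already-established Theorem~\ref{thm:C-1-solution} via the column-space identity $\mathrm{range}(S_w^{-1}\hat{M})=\mathrm{range}(S_t^{-1}\hat{M})$, whose proof (the inclusion $S_tS_w^{-1}v=v+\hat{M}\tilde{Q}\hat{M}^TS_w^{-1}v\in\mathrm{range}(\hat{M})$ plus a dimension count using full column rank of $\hat{M}$) is sound, and then invoke invariance of $J$ under nonsingular right-multiplication, which is legitimate here because Lemma~\ref{thm:invertibility} (and the analogous positive-definiteness argument for $\hat{A}^TS_t\hat{A}$) turns the pseudoinverse into a genuine inverse. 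Your approach is shorter and more transparent: it explains \emph{why} $S_w^{-1}\hat{M}$ is optimal (it spans the same subspace as the known optimal solution $S_t^{-1}\hat{M}$), rather than merely verifying the objective value; the paper's computation buys an explicit closed-form evaluation of $J(\hat{A}_w)$ at every intermediate step but obscures the geometry. Your sketched fallback route is essentially a streamlined version of the paper's own argument, needing only one application of Woodbury via the identity $\hat{M}^TS_t^{-1}\hat{M}=(R^{-1}+\tilde{Q})^{-1}$, and is also correct.
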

\begin{proof}
Since, by lemma~\ref{thm:invertibility}, the matrix $\hat{A}_w^T S_t \hat{A}_w$ is invertible, we can get rid of the pseudoinverse and write the objective function as
$J(\hat{A}_w)=tr\left\{(\hat{A}_w^T S_t \hat{A}_w)^{-1} \hat{A}_w^T S_b \hat{A}_w\right\}$. Using \eqref{eq:sb-based-on-Q-hat}, we rewrite $S_b$ as
\begin{equation}
\label{eq:sb-qtilde}
S_b=\hat{M}\hat{Q}\hat{M}^T+\frac{N_c}{N}\hat{M}\hat{N}\hat{N}^T\hat{M}^T=\hat{M}\tilde{Q}\hat{M}^T,
\end{equation}
where
\begin{equation*}
\tilde{Q}=\hat{Q}+\frac{N_C}{N}\hat{N}\hat{N}^T.
\end{equation*}
Substituting $\hat{A}_w$ and $S_b$ in the objective function, we have
\begin{equation*}
\begin{aligned}
  J(\hat{A}_w)=&tr\left\{(\hat{A}_w^T S_t \hat{A}_w)^{-1} \hat{A}_w^T S_b \hat{A}_w\right\} \\
= &tr\left\{(\hat{M}^T S_w^{-1} (S_w+S_b) S_w^{-1} \hat{M})^{{-1}} \hat{M}^T S_w^{-1} S_b S_w^{-1} \hat{M}\right\} \\
= &tr\left\{(\hat{M}^T S_w^{-1} \hat{M} + \hat{M}^T S_w^{-1}\hat{M}\tilde{Q}\hat{M}^T S_w^{-1} \hat{M})^{{-1}} \hat{M}^T S_w^{-1} \hat{M}\tilde{Q}\hat{M}^T S_w^{-1} \hat{M}\right\}.
\end{aligned}
\end{equation*}
Using the assumption that $R=\hat{M}^T S_w^{-1} \hat{M}$ is invertible, we have
\begin{equation*}
\begin{aligned}
J(\hat{A}_w)= &tr\left\{(R + R\tilde{Q}R)^{{-1}} R\tilde{Q}R\right\} \\
= &tr\left\{(R (R^{-1}+ \tilde{Q})R)^{{-1}} R\tilde{Q}R\right\} \\
= &tr\left\{R^{-1}(R^{-1}+ \tilde{Q})^{-1}R^{-1} R\tilde{Q}R\right\} \\
= &tr\left\{RR^{-1}(R^{-1}+ \tilde{Q})^{-1}\tilde{Q}\right\} \\
= &tr\left\{(R^{-1}+ \tilde{Q})^{-1}\tilde{Q}\right\}.
\end{aligned}
\end{equation*}
Now, by applying the Sherman-Morison-Woodbury formula, we obtain
\begin{equation*}
\begin{aligned}
J(\hat{A}_w)=&tr\left\{(R - R (\tilde{Q}^{-1} +R)^{-1}R)\tilde{Q}\right\}.
\end{aligned}
\end{equation*}
By substituting $R$ with $\hat{M}^T S_w^{-1} \hat{M}$, we have
\begin{equation*}
\begin{aligned}
J(\hat{A}_w)=&tr\left\{(R - R (\tilde{Q}^{-1} +\hat{M}^T S_w^{-1}\hat{M})^{-1}R)\tilde{Q}\right\}.
\end{aligned}
\end{equation*}
We apply the Sherman-Morison-Woodbury formula for the second time to obtain
\begin{equation*}
\begin{aligned}
J(\hat{A}_w)=&tr\left\{(R - R (\tilde{Q} - \tilde{Q}\hat{M}^T (S_w + \hat{M}\tilde{Q}\hat{M}^T)^{-1}\hat{M}\tilde{Q})R)\tilde{Q}\right\}\\
=&tr\left\{R\tilde{Q} - R\tilde{Q}R\tilde{Q} + R\tilde{Q}\hat{M}^T (S_w + S_b)^{-1}\hat{M}\tilde{Q}R\tilde{Q}\right\}\\
=&tr\left\{R\tilde{Q} - R\tilde{Q}R\tilde{Q} + R\tilde{Q}R\tilde{Q}\hat{M}^T (S_t)^{-1}\hat{M}\tilde{Q}\right\}\\
=&tr\left\{R\tilde{Q} (I-R\tilde{Q} (I-\hat{M}^T S_t^{-1}\hat{M}\tilde{Q})\right\}.
\end{aligned}
\end{equation*}
Again, by substituting $R$ with $\hat{M}^T S_w^{-1} \hat{M}$, we have
\begin{equation*}
\begin{aligned}
J(\hat{A}_w)=&tr\left\{\hat{M}^T S_w^{-1}\hat{M}\tilde{Q} (I-\hat{M}^T S_w^{-1}\hat{M}\tilde{Q} (I-\hat{M}S_t^{-1}\hat{M}\tilde{Q})\right\}\\
=&tr\left\{S_w^{-1}\hat{M}\tilde{Q} (I-\hat{M}^T S_w^{-1}\hat{M}\tilde{Q} (I-\hat{M}S_t^{-1}\hat{M}\tilde{Q})\hat{M}^T \right\}\\
=&tr\left\{S_w^{-1}\hat{M}\tilde{Q} (\hat{M}^T-\hat{M}^T S_w^{-1}\hat{M}\tilde{Q} (\hat{M}^T-\hat{M}S_t^{-1}\hat{M}\tilde{Q}\hat{M}^T) \right\}\\
=&tr\left\{S_w^{-1}\hat{M}\tilde{Q}\hat{M}^T (I-S_w^{-1}\hat{M}\tilde{Q}\hat{M}^T (I- S_t^{-1}\hat{M}\tilde{Q}\hat{M}^T) \right\}.
\end{aligned}
\end{equation*}
Using \eqref{eq:sb-qtilde} and the invertibility of $S_t$, which follows from the invertibility of $S_w$, we have
\begin{equation*}
\begin{aligned}
J(\hat{A}_w)=&tr\left\{S_w^{-1}S_b (I-S_w^{-1}S_b (I- S_t^{-1}S_b) \right\}\\
=&tr\left\{S_w^{-1}S_b (I-S_w^{-1}S_b S_t^{-1}(S_t- S_b) \right\}\\
=&tr\left\{S_w^{-1}S_b (I-S_w^{-1}S_b S_t^{-1}S_w \right\}\\
=&tr\left\{S_w^{-1}S_b-S_w^{-1}S_bS_w^{-1}S_b S_t^{-1}S_w \right\}\\
=&tr\left\{S_w^{-1}S_b-S_w^{-1}S_b S_t^{-1}S_wS_w^{-1}S_b \right\}\\
=&tr\left\{S_w^{-1}S_b-S_w^{-1}S_b S_t^{-1}S_b \right\}\\
=&tr\left\{S_w^{-1}S_b(I-S_t^{-1}S_b) \right\}\\
=&tr\left\{S_w^{-1}S_bS_t^{-1}(S_t-S_b) \right\}\\
=&tr\left\{S_w^{-1}S_bS_t^{-1}S_w \right\}\\
=&tr\left\{S_t^{-1}S_wS_w^{-1}S_b \right\}
=tr\left\{S_t^{-1}S_b \right\},
\end{aligned}
\end{equation*}
which again is the maximum attainable value, since it equals to the objective value of the original space without any dimensionality reduction. Thus, $\hat{A}_w=S_w^{-1} \hat{M}$ is an optimal solution.
\end{proof}

Furthermore, since $A_w=S_w^{-1} M$ has one more feature than $\hat{A}_w=S_w^{-1} \hat{M}$, it also optimizes the objective function of LDA whenever $\hat{M}^T S_w^{-1} \hat{M}$ is invertible.

\section{Experiments}
\label{sec:experiments}
In this section, we compare EIG-LDA, LDA++, and EIG-LDA++ by performing appropriately chosen experiments.
In Section~\ref{sec:experiments-orl}, we perform experiments on the ORL face recognition dataset \citep{samaria1994parameterisation_orl_dataset} which is an example of HD/SSS setting with $10,304$ dimensions.
We visualize the Fisherfaces of LDA++ and see that each Fisherface corresponds to a separate cluster of faces of an identity.
In Section~\ref{sec:experiments-feret}, we consider the task of gender recognition on the FERET dataset \citep{phillips1998feret, phillips2000feret} and complete our previous discussion at Section~\ref{sec:non-discriminative} about the \textit{non-discriminative feature-weighting} property of LDA with a real example. 
Considering the relationship between LDA and multiclass classification, as discussed in Section~\ref{sec:relation-with-classification}, in Section~\ref{sec:experiments-mnist}, we perform experiments on the MNIST digit recognition dataset \citep{lecun1998gradient_mnist_dataset}, which is an example of LDLSS setting with $60,000$ training samples, and depict the cluster centers and Fisherdigits of EIG-LDA and LDA++ along with the positive centers and weights of a $\pm$ED-WTA \citep{ZareiGhiasi2022} single-layer neural network. We show that, as $\pm$ED-WTA \citep{ZareiGhiasi2022} provides a prototype-based interpretation for multiclass classification, similarly, LDA++ provides a prototype-based interpretation for multiclass linear discriminant analysis.

In Section~\ref{sec:experiments-uci}, we perform experiments on ten UCI datasets chosen by Wan et al. \cite{wan2017separability} along with a modified version of the iris dataset to experimentally support our theoretical results obtained in Section~\ref{sec:proposed-method} and to evaluate EIG-LDA, LDA++, and EIG-LDA++ on a variety of datasets with different numbers of classes, features, and samples. 
Considering kernel LDA, in Section~\ref{sec:experiments-iris}, we repeat the experiment of Baudat and Anouar \cite{baudat2000generalized} on the iris dataset and show that the results reported therein contain numerical error and all eigenvalues are actually one, supporting one of our findings about kernel LDA with strictly positive definite kernels.
Finally, in Section~\ref{sec:experiments-timing}, we compare the training time of EIG-LDA, LDA++, and EIG-LDA++ in LDLSS, HD/SSS, and kernel LDA settings.
The LDA++ package along with the codes of the experiments of this section are publicly available at the authors github page\footnote {\url{https://github.com/k-ghiasi/LDA-plus-plus}}.

\subsection{Experiments on ORL face dataset}
\label{sec:experiments-orl}
In this section, we report the results of our experiments on the ORL face recognition dataset \citep{samaria1994parameterisation_orl_dataset}.
This dataset contains $400$ images of size $92\times 112$ and is an example of the HD/SSS setting with dimension $92\times 112=10,304$.
The images are taken from $40$ subjects in $10$ different conditions: with/without glass, varying the lighting, open/closed eyes, and smiling/not smiling.
First, we illustrate the learned feature extractors of EIG-LDA and LDA++ on this dataset. 
We clustered the images of each person into $4$ subclasses and trained LDA algorithms with a regularization parameter of $1$.
Fig.~\ref{fig:orl-filters} shows the filters learned by EIG-LDA and LDA++ along with the centers of the subclasses, which were obtained using the k-means clustering algorithm.
Clearly, the filters learned by LDA++ have much higher interpretability than those of EIG-LDA.

In another experiment, 
we used 10-fold cross-validation to estimate the accuracy of the nearest neighbor classifier on EIG-LDA, LDA++, and EIG-LDA++ features.
In all 10 experiments, all methods optimized the objective function to the same level, with an average value of $43.94925\pm 0.20$. 
The recognition rates for EIG-LDA, LDA++, and EIG-LDA++ are $98.25\pm 2.25\%$, $97.00\pm 2.92\%$, and 
$98.25\pm 2.25\%$ respectively.

\subsection{Gender-Recognition experiment on FERET}
\label{sec:experiments-feret}
In this section, we design an experiment for discriminating between faces of men and women. We want to know which parts of the face contribute more to masculinity and femininity. We perform this experiment on the color FERET dataset \citep{phillips1998feret, phillips2000feret} which contains $11,338$ facial images of size $512\times 768$. This dataset is distributed in two DVDs, from which we use only the images of the first DVD. We extracted the frontal facial images in which the positions of the left and right eyes had been labeled and the subject did not wear glasses, resulting in $822$ male faces and $654$ female faces. 
We aligned faces using the \textit{imutils} python package in a way that the left and right eyes have fixed positions in the image and cropped and resized the images to obtain pictures of size $182\times 182$. 

We clustered each of the male and female faces into $10$ clusters using the k-means algorithm. Figure~\ref{fig:feret-clusters} shows the centers of clusters of male and female faces.
Since the dimension of data is very high, i.e. 
$182\times 182=33,124$, we use Algorithm~\ref{alg:lda-svd} for EIG-LDA and Algorithm~\ref{alg:prop-lda-high-dim} for LDA++. We used a regularization parameter of $10$. The shape of the learned filters for EIG-LDA and LDA++ are shown in 
Fig.~\ref{fig:feret-lda-filters} and Fig.~\ref{fig:feret-ghiasi-lda-filters}, respectively. 

Also, we performed another experiment in which $90\%$ of samples were used for training, and the remaining $10\%$ samples were used for testing. We experimented $10$ times with different random splitting of the training and testing samples. In all of the $10$ runs, EIG-LDA, LDA++, and EIG-LDA++ optimized the objective function to the same value, with an average of $6.109499\pm 0.07387$.
EIG-LDA, LDA++, and EIG-LDA++ obtained the recognition rates of $83.96\pm 4.44\%$, $83.36\pm 3.39\%$, and $84.09\pm 4.64\%$, respectively.

A basic method for measuring the similarity of an input image $x$ to the centers of clusters is computing the dot-product between them. The problem with this approach is that all pixels of the image have equal contribution to the similarity score, making no difference between the pixels around the eye and the background pixels.
The ideal is to weight the pixels according to their importance in the gender-recognition task. 
However, the optimal solution $S_t^\dagger M$ for LDA is disappointing as it weights the pixels based on the total scatter matrix, without any attention to the task at hand (i.e. gender recognition). 
At the end of Section~\ref{sec:non-discriminative}, we introduced another interpretation of LDA++ as the dot-product between the weighted PCA features of the centers of clusters and the input data. 
Figure~\ref{fig:pca-inside-lda} visualizes this interpretation of LDA++ features on the FERET dataset.
This explanation completes the discussion at Section~\ref{sec:non-discriminative} about \textit{non-discriminative feature-weighting} property of LDA.

\subsection{Experiments on MNIST}
\label{sec:experiments-mnist}
In this section, we report the results of our experiments on the MNIST dataset \citep{lecun1998gradient_mnist_dataset}.
This dataset consists of 60,000 training images and 10,000 testing images of size $28\times 28$ from handwritten English digits $0$ to $9$. 
In this experiment, we trained LDA with a regularization parameter of $1$.
We clustered the images of each digit into $6$ subclasses using the k-means clustering algorithm.
Fig.~\ref{fig:mnist-filters}-(a) illustrates the centers of these clusters. 
We then performed linear discriminant analysis on these $60$ clusters.
Fig.~\ref{fig:mnist-filters}-(b) 
and
Fig.~\ref{fig:mnist-filters}-(c)
visualize the feature extractors obtained by EIG-LDA and LDA++, respectively.
It can be seen that although both the classical and the proposed solvers of LDA optimized the objective function of LDA to the same value of $7.347679$, the $C$ filters obtained by LDA++ are much more interpretable than the $C-1$ filters obtained by EIG-LDA.
Using the nearest neighbor classifier, the recognition rates of EIG-LDA, LDA++, and EIG-LDA++ were $97.24\%$, $97.09\%$, and $97.26\%$ ,respectively. 

We also trained a $\pm$ED-WTA network \citep{ZareiGhiasi2022} with $6$ neurons for each class.
$\pm$ED-WTA is a novel single-layer neural network which yields interpretable prototypes for each neuron in addition to the weights and biases.
We initialized the positive centers of the $\pm$ED-WTA network by the centers obtained by the k-means clustering algorithm and the negative centers with the mean of all training data. We trained the network for $100$ epochs with an initial learning rate of $0.01$ and a weight decay of $0.0001$ times the learning rate. The learning rate was decayed at each epoch by a factor of $0.95$. 
The final values of the positive centers and the weights are visualized in Fig.~\ref{fig:mnist-filters-pmEDWTA}. 
The neural network obtained an accuracy of $96.68\%$ using its built-in winner-take-all classifier and an accuracy of $97.13\%$ using the nearest neighbor classifier. 
Comparison of Fig.~\ref{fig:mnist-filters} and Fig.~\ref{fig:mnist-filters-pmEDWTA} reveals the similarity between the weights of LDA and a winner-takes-all classifier and the prototypes associated with them.

\subsection{Experiments on UCI datasets}
\label{sec:experiments-uci}
In this section, we evaluate the proposed method on 10 UCI datasets chosen by Wan et al. \cite{wan2017separability}.
The information of these datasets is tabulated in Table~\ref{tbl:uci-datasets}. We draw the attention of the reader to the leaf dataset in which the number of features (i.e. 14) is less than the number of classes (i.e. 30). 
This shows that LDA is detoured from its original goal of dimensionality reduction to a combination of representation and metric learning.
In fact, by considering subclass structure, as was done in \citep{wan2017separability}, the number of extracted features from many other datasets (e.g. iris, red wine quality, breast tissue, seeds, and banknote) would be more than the number of input attributes.
We also constructed a new dataset which we call 'singular isis'.
The 'singular isis' dataset is the same as the isis dataset with the single difference that the input feature vector is augmented with the class label. Since the samples of each class have the same value for this new feature, the within-class scatter matrix becomes singular.

In all datasets, the number of samples is more than the number of attributes (i.e. LDLSS setting). 
We had an option to choose either Algorithm~\ref{alg:prop-lda-low-dim} or Algorithm~\ref{alg:prop-lda-high-dim}. We experimented with both algorithms and the results were identical. In this section, LDA++ refers to either Algorithm~\ref{alg:prop-lda-low-dim} or Algorithm~\ref{alg:prop-lda-high-dim}. 
We also performed experiments with EIG-LDA++ and $A_w=S_w^\dagger M$.
We split the samples of each dataset into 10 folds.
To estimate the recognition rate, we used 9 folds for training and the remaining fold for testing.
We then reported the average recognition rates on the 10 possible ways of choosing the training and testing folds.
Table~\ref{tbl:uci-objective} reports the objective values attained by EIG-LDA, LDA++, EIG-LDA++, and $A_w=S_w^\dagger M$. As can be seen, in all datasets, EIG-LDA, LDA++ , and EIG-LDA++ have attained the same objective value in all 10 runs (except a negligible difference in 'urban land').
Besides, in all experiments, $A_w=S_w^\dagger M$ has been an optimal solution, except in the 'singular iris' in which the within-class scatter matrix is singular and the premises of Theorem~\ref{thm:SwPinvM-optimality} do not hold.
Table~\ref{tbl:uci-accuracy} reports the average recognition rates of EIG-LDA, LDA++, EIG-LDA++, and $A_w=S_w^\dagger M$ when classified using the nearest neighbor classifier. Although all methods optimize the objective function of LDA to the same level, their performances on the nearest neighbor classifier differ.

To check that all methods are indeed finding the same subspace, we passed the filters learned by them through a QR decomposition and used the columns of $Q$ as feature extractors to find an orthonormal basis for the subspace acquired by each method. 
If the methods have learned the same subspace, then the learned matrices $Q$ would be two orthonormal bases for the same subspace and the accuracy of the nearest neighbor classifier on the features extracted by the $Q$ matrices would be the same. 
Table~\ref{tbl:uci-accuracy-qr} tabulates the accuracy of the nearest neighbor classifier on an orthonormal basis for the subspace learned by EIG-LDA and LDA++. As expected, all methods yield the same accuracies on all datasets except for $A_w=S_w^\dagger M$ on 'singular iris' which was expected. This verifies that EIG-LDA and LDA++ find two different non-orthogonal bases for the same subspace.

\subsection{Comparison with KFDA on Iris}
\label{sec:experiments-iris}
In this section, we repeat the experiment of Baudat and Anouar\cite{baudat2000generalized} on the iris dataset and mention some interesting facts. The iris dataset was originally introduced by Fisher \cite{fisher1936use}.
It consists of 150 samples with 4 features from 3 classes, each class containing 50 samples.
Baudat and Anouar \cite{baudat2000generalized} trained kernel LDA with a Gaussian kernel with $\sigma=0.7$ and reported that the first two eigenvalues were $0.999$ and $0.985$. 
However, as mentioned in Section~\ref{sec:prop-lda-kernel}, for strictly positive definite kernels, all non-zero eigenvalues ought to be $1$. 
In fact, the inaccuracy in finding eigenvalues were introduced in their implementation because of performing an eigen-decomposition of the kernel matrix and throwing away small eigenvalues and their corresponding eigenvectors\footnote{We are thankful to  Baudat and Anouar \cite{baudat2000generalized} for making their implementation publicly available. This detailed analysis was impossible without having access to their code.}.
We repeated this experiment and solved it both with Algorithm~\ref{alg:lda-kernel} of Baudat and Anouar \cite{baudat2000generalized} and Algorithm~\ref{alg:prop-lda-kernel} proposed in this paper.
Both methods found that the first two eigenvalues are $1$ and the objective function is $2$ (up to 12 digits after the decimal point).

\subsection{Timing experiments}
\label{sec:experiments-timing}
In this section, we compare the training time of EIG-LDA and LDA++.
To have the freedom to manipulate the number of training samples and the input dimension, we constructed an artificial dataset with 3 classes, where data for each class comes from a Gaussian distribution.
The first two features are generated from a normal distribution with covariance matrix 
$\begin{bmatrix} 4.625 & 4.375 \\ 4.375 & 4.625\end{bmatrix}$.
For the three classes, the mean of the first two features are $[-5,-5]$, $[0,0]$, and $[5,5]$.
The rest of the features are generated from a normal distribution with mean $0$ and standard deviation $0.5$.
The timing experiments of this section have been performed on a UX310UQ laptop computer with an Intel(R) Core(TM) i7-6500U CPU and 12GB memory.

We designed two experiments for the LSLSS and HD/SSS scenarios. 
For LDLSS scenario, we used $N=36,000$ samples, consisting of $12,000$ samples for each class. 
Table~\ref{tbl:timing-scenario-1} compares the running time of the classical Algorithm~\ref{alg:lda-eigen} with the proposed Algorithm~\ref{alg:prop-lda-low-dim}. 
As can be seen, initially Algorithm~\ref{alg:prop-lda-low-dim} is faster but as the number of features increases, the classical Algorithm~\ref{alg:lda-eigen} becomes faster.
Note that the dimensionality of the data cannot be increased arbitrarily since then the problem switches from LDLSS to HD/SSS which should be solved by eiher Algorithms~\ref{alg:lda-svd} or Algorithms~\ref{alg:prop-lda-high-dim}.
For the HD/SSS scenario, we used $N=900$ samples, consisting of $300$ samples for each class. 
Table~\ref{tbl:timing-scenario-2} compares the running time of the classical Algorithm~\ref{alg:lda-svd} with the proposed Algorithm~\ref{alg:prop-lda-high-dim}. 
In this experiment, we observe that Algorithm~\ref{alg:prop-lda-high-dim} is slightly faster. 
We also have reported the value of the objective function at the solutions which shows that both classical and the proposed algorithms have attained identical objective values.

In our last experiment, we compare the training time of classical and the proposed kernel LDA algorithms. 
We used $N=900$ samples, where the number of samples of each class is $300$. 
We used the Gaussian kernel function $k(x,z)=exp{\left(-\|x-z\|^2/\sigma^2\right)}$ with $\sigma^2=10$.
Table~\ref{tbl:timing-scenario-3} compares the running time of the classical Algorithm~\ref{alg:lda-kernel} with the proposed Algorithm~\ref{alg:prop-lda-kernel}.
We observe that Algorithm~\ref{alg:prop-lda-kernel} is slightly faster. 
Considering that for Gaussian kernel the kernel matrix is invertible, all eigenvalues are one and the objective function in all entries of Table~\ref{tbl:timing-scenario-3} is $2.0$. 
Overall, it can be said that for practical purposes, the running time of the classical and the proposed solvers are similar. 

\section{Conclusion}
\label{sec:conclusion}
In this paper, we revisited classical  multiclass LDA and introduced a novel interpretable solution to LDA, called LDA++, which does not pass through solving an eigensystem. We considered a modernized variant of classical LDA in which
\begin{itemize}
\item the objective function is generalized from $(A^T S_t A)^{-1}(A^T S_b A)$ to $(A^T S_t A)^\dagger(A^T S_b A)$  \cite{ye2005characterization},
\item each class is partitioned into several clusters and the analysis is performed on clusters instead of classes,
\item the analysis is applicable to the HD/SSS setting in which the scatter matrices cannot be computed or well estimated, and
\item the analysis may be performed in a feature space associated with a positive definite kernel function.
\end{itemize}

In contrast to LDA, which extracts at most $C-1$ features, where $C$ stands for the number of clusters, LDA++ extracts $C$ interpretable features, where each feature shows the similarity to a cluster.
We proved that any $(C-1)$-subset of these features is also an optimal solution to LDA.
Considering that a multi-prototype multiclass classifier also computes a score function for each of the $C$ clusters, we investigated a relationship between LDA and multiclass classification. We proved that, under some mild conditions, the optimal weights of a multiclass linear classifier for homoscedastic Gaussian data are also optimal discriminative features for LDA. Again, we proved that any $(C-1)$-subset of these features is also an optimal solution to LDA.
We introduced two numerical algorithms for LDLSS and HD/SSS settings.

Considering that both EIG-LDA and LDA++ with $C-1$ features are solutions to classical LDA, they should be related by a non-singular metric-changing matrix. We explicitly find this metric-changing transformation for EIG-LDA and propose a similar metric-changing transformation for LDA++ with $C$ features and arrive at another solution called EIG-LDA++.

LDA++ revealed that, in contrast to the common belief, an optimal solution to LDA has nothing to do with the within-class and between-class scatter matrices, and can be described merely based on the centers of clusters and the total scatter matrix.
Specifically, each feature of LDA++ is the dot-product between the weighted PCA features of a cluster center and the input.

Besides, we observed that different optimal solutions to LDA have different recognition rates when used with the nearest neighbor classifier
This observation is consistent with our theoretical understanding, since LDA seeks feature extraction for the Bayes classifier, not the nearest neighbor \citep{fukunaga1990statistical}.
This shows that comparing new LDA methods based on their performance on the nearest neighbor classifier has detoured research on linear discriminate analysis from dimensionality reduction to metric learning. 

The main limitation of the proposed interpretable solution is that, at the moment, it is only applicable to the objective function of the classical LDA and the generalization proposed by Ye \cite{ye2005characterization}. 
As a result, LDA++ inherits all limitations of classical LDA, including limited applicability for extremely non-Gaussian distributed data and inferior performance compared to more sophisticated objective functions. 
Obtaining an interpretable solution to other variants of LDA is a potential direction for future research.

\section*{Acknowledgement}
Portions of the research in this paper use the FERET database of facial images collected under the FERET program, sponsored by the DOD Counterdrug Technology Development Program Office.

\begin{figure}[h]
\centering
\subfloat[Visualization of the first 40 filters from 159 filters learned by EIG-LDA.]{
\includegraphics[width=0.75\columnwidth]{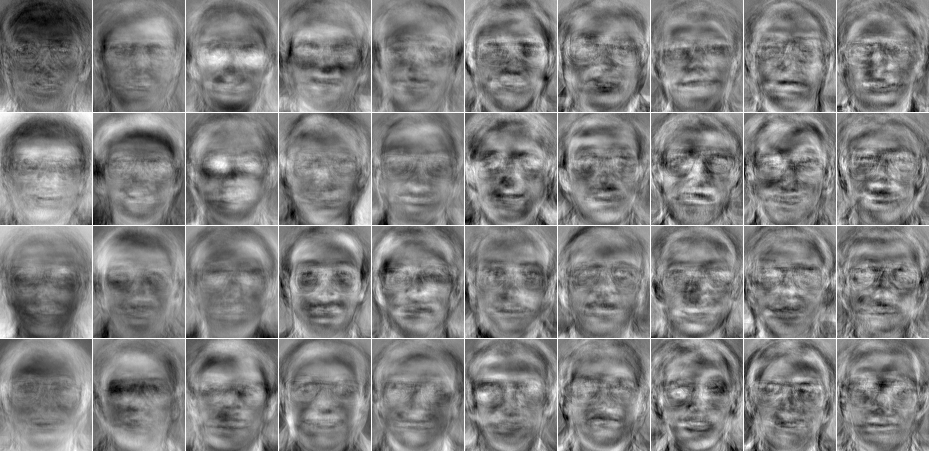}
\label{}
}
\vfill
\centering
\subfloat[Visualization of the first 40 filters from 160 filters learned by LDA++.]{
\includegraphics[width=0.75\columnwidth]{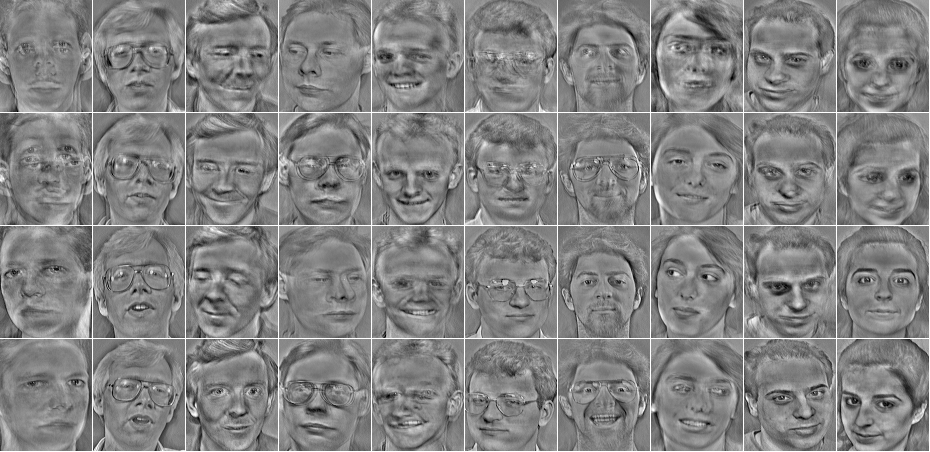}
\label{}
}
\vfill
\centering
\subfloat[Centers of $4$ subclasses of face images of the first $10$ subjects of the ORL dataset.]{
\includegraphics[width=0.75\columnwidth]{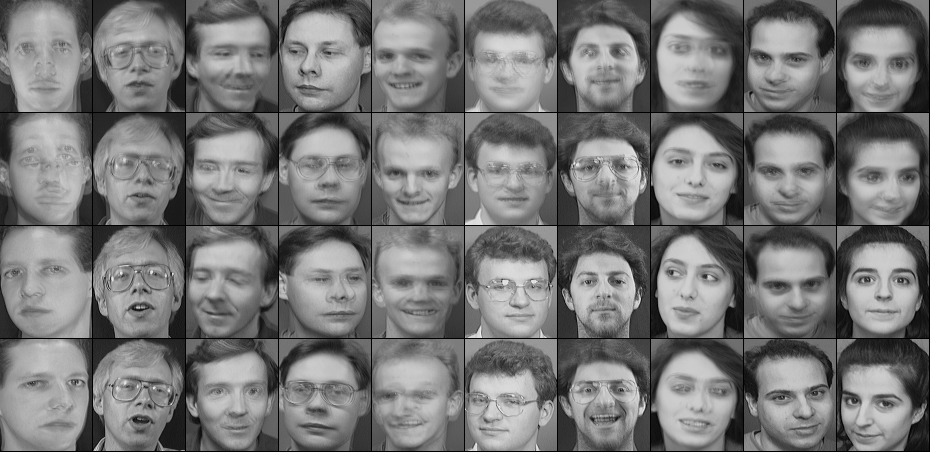}
\label{fig:orl-cluster}
}
\vfill
\centering
 \caption{Visualization of some of the linear feature extractors learned by (a) EIG-LDA and (b) LDA++ on the ORL dataset. While EIG-LDA extracted $C-1=159$ features, LDA++ extracts $C=160$ features, where each one is associated with a cluster. The cluster centers are depicted in (c). Both solvers optimized the objective function of LDA to the same value of $42.44.480186$.  
For better visualization, only the first $40$ feature extractors are shown.}
  \label{fig:orl-filters}
\end{figure}

\begin{figure}[h]
\centering
\subfloat[Visualization of the centers of the $10$ male and $10$ female clusters learned by the k-means algorithm.]{
\label{fig:feret-clusters}
\includegraphics[width=1\columnwidth]
{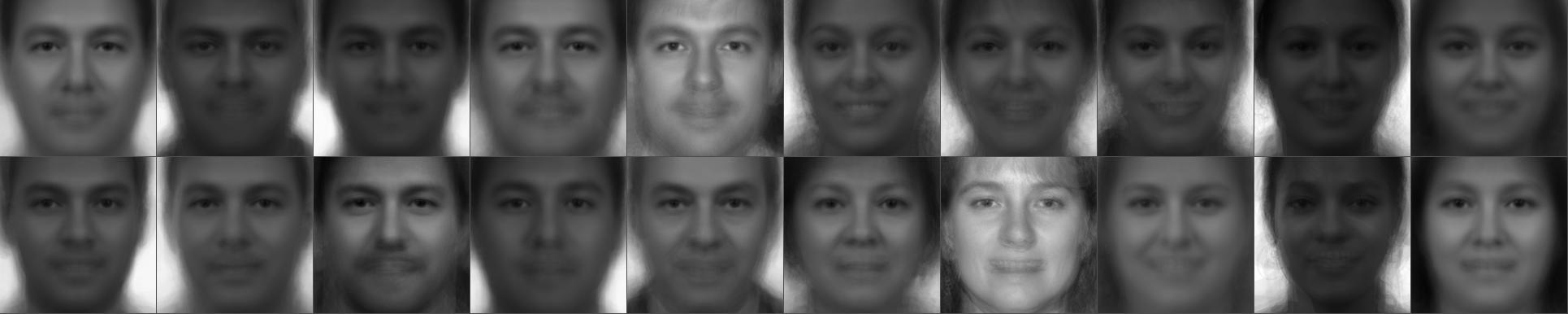}
}
\vfill
\centering
\subfloat[The transformations learned by EIG-LDA.]{
\includegraphics[width=1\columnwidth]
{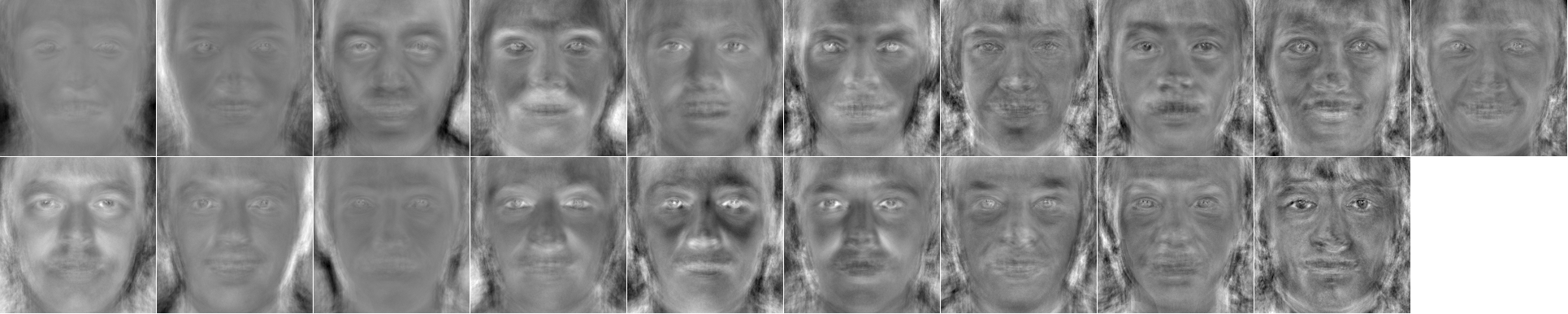}
\label{fig:feret-lda-filters}
}
\vfill
\centering
\subfloat[The transformations learned by LDA++.]{
\includegraphics[width=1\columnwidth]
{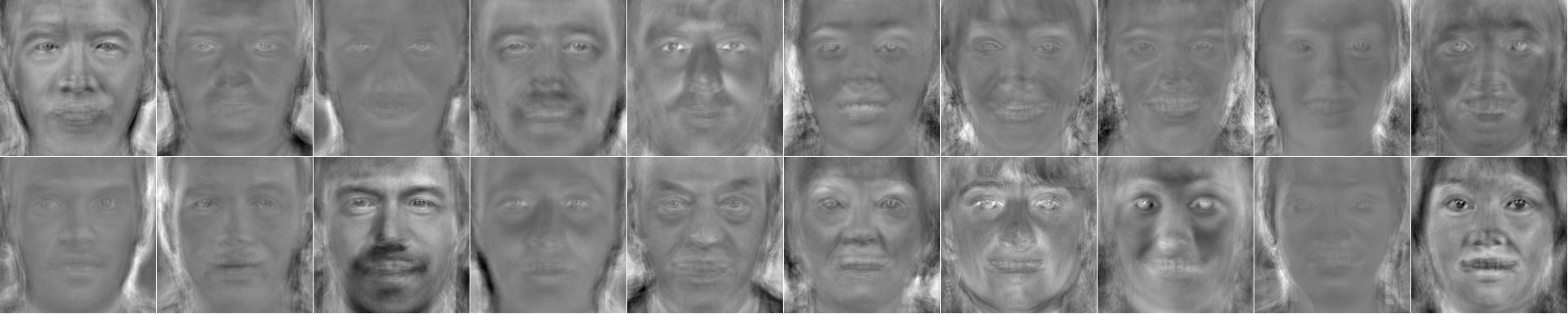}
\label{fig:feret-ghiasi-lda-filters}
}
\vfill
 \caption{Visualization of EIG-LDA and LDA++ for the task of gender recognition on the FERET dataset. Both solutions obtained the optimal objective value $4.09324$.}
  \label{fig:feret-filters}
\vspace{0.5cm}
\centering
\includegraphics[width=1\columnwidth]{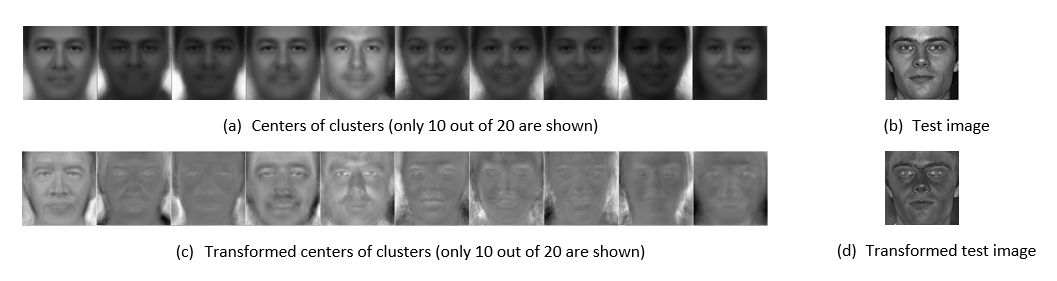}
 \caption{Another interpretation of the features extracted by LDA++. Both centers of clusters and test images are transformed by the PCA and weighted by the inverse of the square root of eigenvalues of $S_t$. 
 For visualization purposes, PCA features are transformed back into the input space. The features extracted by LDA++ are the dot-product between the transformed test image and transformed centers. This finding shows that an optimal solution to LDA internally uses PCA for weighting features, and the supervised labels are only used in finding cluster centers.}
  \label{fig:pca-inside-lda}
\end{figure}

\begin{figure}[h]
\centering
\subfloat[The transformations learned by EIG-LDA.]{
\includegraphics[width=0.45\columnwidth]{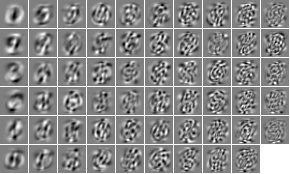}
\label{}
}
\vfill
\centering
\subfloat[Centers of the $60$ clusters learned by k-means. These are prototypes associated with features of LDA++.]{
\includegraphics[width=0.45\columnwidth]
{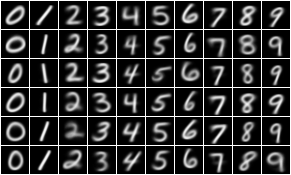}
\label{}
}
\hfill
\centering
\subfloat[The transformations learned by LDA++.]{
\includegraphics[width=0.45\columnwidth]{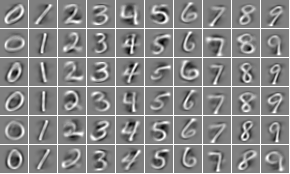}
\label{}
}
\hfill
 \caption{Visualization of EIG-LDA and LDA++ on the MNIST dataset when each of the $10$ digits is clustered into $6$ subclasses, yielding a total of $C=60$ clusters. Both solutions obtained the optimal objective value $7.347679$.
Using the nearest neighbor classifier, EIG-LDA and LDA++ obtained recognition rates of $97.24\%$ and $97.09\%$, respectively. 
 }
  \label{fig:mnist-filters}
\vspace{0.5cm}  
\centering
\subfloat[Positive centers associated with each neuron of a $\pm$ED-WTA network trained on MNIST.]{
\includegraphics[width=0.45\columnwidth]{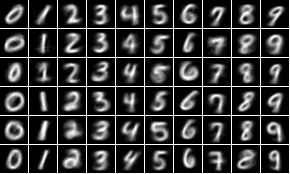}
\label{}
}
\hfill
\centering
\subfloat[Weights of neurons of a $\pm$ED-WTA network trained on MNIST.]{
\includegraphics[width=0.45\columnwidth]
{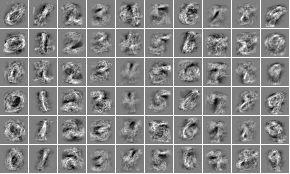}
\label{}
}
\hfill
\caption{Visualization of positive centers and the weights of a single-layer neural network, modeled as $\pm$ED-WTA \citep{ZareiGhiasi2022},  with $6$ output neurons for each class, trained for $100$ epochs on the MNIST dataset. 
The positive centers have been initialized to the cluster centers depicted in Fig.~\ref{fig:mnist-filters}(a) and the negative centers have been initialized to the mean of all training samples. 
The recognition accuracy of the winner-take-all and the nearest neighbor classifiers are $96.68\%$ and $97.13\%$, respectively.
}
  \label{fig:mnist-filters-pmEDWTA}
\end{figure}

\begin{table}[t]
\renewcommand{\arraystretch}{1}
\caption{Information about the 10 UCI datasets chosen by Wan et al. \cite{wan2017separability}}
\label{tbl:uci-datasets}
  \centering
  \begin{tabular}{l|r|r|r}
    \hline
    \multicolumn{1}{c|}{\small\textbf{Dataset}} &
    \multicolumn{1}{c|}{\small\textbf{\#Samples}} &
    \multicolumn{1}{c|}{\small\textbf{\#Features}} & 
    \multicolumn{1}{c}{\small\textbf{\#Classes}} \\
    \hline
banknote  &  1372 & 4 & 2\\
breast tissue & 106 & 9 & 6 \\
forest types &  523 & 27 & 4\\
iris  &  150 & 4 & 3\\
leaf  &  340 & 14 & 30\\
red wine quality  & 1599 & 11 & 6 \\
seeds  & 210 & 7 & 3 \\
urban land  &  675 & 147 & 9\\
vehicle  &  846 & 18 & 4\\
wdbc  &  569 & 30 & 2\\
    \hline
  \end{tabular}  
\end{table}
\begin{table}[!t]
\renewcommand{\arraystretch}{1}
\caption{Average objective values attained by EIG-LDA ,LDA++, EIG-LDA++, and  $S_w^\dagger M$ on the selected UCI datasets. The slightly lower reported value for LDA++ on urban\_land is a numerical  inaccuracy as is evident from the value of EIG-LDA++ on the same dataset.}
\label{tbl:uci-objective}
  \centering
\begin{tabular}{l|c|c|c|c}
    \hline
    \multicolumn{1}{c|}{\small\textbf{Dataset}} &
    \multicolumn{1}{c|}{\small\textbf{EIG-LDA}} &
    \multicolumn{1}{c|}{\small\textbf{LDA++}} &
    \multicolumn{1}{c|}{\small\textbf{EIG-LDA++}}&
    \multicolumn{1}{c}{\small{$\mathbf{S_w^\dagger M}$}} \\
    \hline
banknote  & $ 0.86 $  & $ 0.86 $  & $ 0.86 $  & $ 0.86 $ \\
breast\_tissue  & $ 2.13 $  & $ 2.13 $  & $ 2.13 $  & $ 2.13 $ \\
forest\_types  & $ 1.86 $  & $ 1.86 $  & $ 1.86 $  & $ 1.86 $ \\
iris  & $ 1.19 $  & $ 1.19 $  & $ 1.19 $  & $ 1.19 $ \\
leaf  & $ 7.90 $  & $ 7.90 $  & $ 7.90 $  & $ 7.90 $ \\
rwq  & $ 0.50 $  & $ 0.50 $  & $ 0.50 $  & $ 0.50 $ \\
seeds  & $ 1.61 $  & $ 1.61 $  & $ 1.61 $  & $ 1.61 $ \\
urban\_land  & $ 5.28 $  & $ 5.27 $  & $ 5.28 $  & $ 5.28 $ \\
vehicle  & $ 1.51 $  & $ 1.51 $  & $ 1.51 $  & $ 1.51 $ \\
wdbc  & $ 0.78 $  & $ 0.78 $  & $ 0.78 $  & $ 0.78 $ \\
singular\_iris  & $ 1.66 $  & $ 1.66 $  & $ 1.66 $  & $\mathbf{1.19}$ \\

    \hline
  \end{tabular}  
\end{table}
\begin{table}[!t]
\renewcommand{\arraystretch}{1}
\caption{Average recognition rates (in percent) of the nearest neighbor classifier using EIG-LDA, LDA++, EIG-LDA++, and $S_w^\dagger M$  on the selected UCI datasets.}
\label{tbl:uci-accuracy}
  \centering
\begin{tabular}{l|r|r|r|r}
    \hline
    \multicolumn{1}{c|}{\small\textbf{Dataset}} &
    \multicolumn{1}{c|}{\small\textbf{EIG-LDA}} &
    \multicolumn{1}{c|}{\small\textbf{LDA++}} &
    \multicolumn{1}{c}{\small\textbf{EIG-LDA++}} &
    \multicolumn{1}{c}{\small{$\mathbf{S_w^\dagger M}$}} \\    
    \hline

banknote  & $ 99.57 \pm  0.48 $  & $ 99.57 \pm  0.48 $  & $ 99.93 \pm  0.22 $  & $ 99.57 \pm  0.48 $ \\
breast\_tissue  & $ 67.14 \pm  12.45 $  & $ 66.43 \pm  13.19 $  & $ 67.86 \pm  11.18 $  & $ 61.43 \pm  12.04 $ \\
forest\_types  & $ 85.00 \pm  4.26 $  & $ 86.11 \pm  4.32 $  & $ 85.19 \pm  4.83 $  & $ 85.00 \pm  2.10 $ \\
iris  & $ 94.67 \pm  4.99 $  & $ 94.00 \pm  4.67 $  & $ 95.33 \pm  5.21 $  & $ 96.00 \pm  5.33 $ \\
leaf  & $ 79.23 \pm  3.63 $  & $ 76.92 \pm  5.70 $  & $ 76.92 \pm  5.70 $  & $ 74.81 \pm  5.33 $ \\
rwq  & $ 63.09 \pm  3.28 $  & $ 63.83 \pm  2.38 $  & $ 65.06 \pm  4.76 $  & $ 64.01 \pm  2.24 $ \\
seeds  & $ 96.19 \pm  3.56 $  & $ 97.14 \pm  3.16 $  & $ 96.19 \pm  4.67 $  & $ 96.67 \pm  3.05 $ \\
urban\_land  & $ 77.92 \pm  5.14 $  & $ 75.00 \pm  5.86 $  & $ 77.92 \pm  5.25 $  & $ 79.17 \pm  6.15 $ \\
vehicle  & $ 75.93 \pm  4.77 $  & $ 75.47 \pm  4.30 $  & $ 75.81 \pm  2.79 $  & $ 74.88 \pm  3.57 $ \\
wdbc  & $ 94.66 \pm  3.97 $  & $ 94.66 \pm  3.97 $  & $ 95.52 \pm  2.46 $  & $ 94.66 \pm  3.97 $ \\
singular\_iris  & $ 100.00 \pm  0.00 $  & $ 100.00 \pm  0.00 $  & $ 100.00 \pm  0.00 $  & $ 96.00 \pm  5.33 $ \\

    \hline
  \end{tabular}  
\end{table}
\begin{table}[!t]
\renewcommand{\arraystretch}{1}
\caption{Average recognition rates (in percent) of the nearest neighbor classifier using EIG-LDA, LDA++, EIG-LDA++, and $S_w^\dagger M$ on the selected UCI datasets when the columns of the dimensionality reduction matrix are orthogonalized using the QR decomposition.}
\label{tbl:uci-accuracy-qr}
  \centering
  \begin{tabular}{l|r|r}
    \hline
    \multicolumn{1}{c|}{\small\textbf{Dataset}} &
    \multicolumn{1}{c|}{
    $\begin{array}{c} \small\textbf{EIG-LDA} \\ \small\textbf{LDA++}\\
\small\textbf{EIG-LDA++}\end{array}$} &
    \multicolumn{1}{c}{\small\textbf{$\mathbf{S_w^\dagger M}$}} \\
    \hline
banknote  & $ 99.57 \pm  0.48 $  & $ 99.57 \pm  0.48 $ \\
breast\_tissue  & $ 67.86 \pm  13.27 $  & $ 67.86 \pm  13.27 $ \\
forest\_types  & $ 84.44 \pm  3.33 $  & $ 84.44 \pm  3.33 $ \\
iris  & $ 94.67 \pm  4.99 $  & $ 94.67 \pm  4.99 $ \\
leaf  & $ 75.0 \pm  7.25 $  & $ 75.0 \pm  7.25 $ \\
rwq  & $ 62.22 \pm  4.05 $  & $ 62.22 \pm  4.05 $ \\
seeds  & $ 96.19 \pm  3.56 $  & $ 96.19 \pm  3.56 $ \\
urban\_land  & $ 56.39 \pm  5.39 $  & $ 56.39 \pm  5.39 $ \\
vehicle  & $ 75.7 \pm  4.27 $  & $ 75.7 \pm  4.27 $ \\
wdbc  & $ 94.66 \pm  3.97 $  & $ 94.66 \pm  3.97 $ \\
singular\_iris  & $ 100.00 \pm  0.00 $  & ${ \mathbf{94.67 \pm  4.99 }}$ \\
    \hline
  \end{tabular}  
\end{table}

\begin{table}[!t]
\renewcommand{\arraystretch}{1}
\caption{Average (over five runs) training time (in seconds) of EIG-LDA (using Algorithm~\ref{alg:lda-eigen}) and LDA++ and EIG-LDA++ (using Algorithm~\ref{alg:prop-lda-low-dim}) in LDLSS setting with $N=36,000$ samples.}
\label{tbl:timing-scenario-1}
  \centering
  \begin{tabular}{r|r|r|r}
    \hline
    {\small\textbf{Dim}} &
    \multicolumn{1}{c|}{\small\textbf{EIG-LDA}} &
    \multicolumn{1}{c|}{\small\textbf{LDA++}} &    
	\multicolumn{1}{c}{\small\textbf{EIG-LDA++}} \\ 
    \hline
256 & 0.1644 & 0.1622 & 0.1623 \\
512 & 0.4502 & 0.5075 & 0.5076 \\
1024 & 1.6411 & 1.9415 & 1.9416 \\
2048 & 6.6982 & 8.4851 & 8.4853 \\
4096 & 32.6432 & 45.1203 & 45.1207 \\
8192 & 189.1050 & 276.0101 & 276.0114 \\
    \hline
  \end{tabular}  
\end{table}

\begin{table}[!t]
\renewcommand{\arraystretch}{1}
\caption{
Average (over five runs) training time (in seconds) of EIG-LDA (using Algorithm~\ref{alg:lda-svd}) and LDA++ and EIG-LDA++ (using Algorithm~\ref{alg:prop-lda-high-dim}) in HD/SSS setting with $N=900$ samples.}
\label{tbl:timing-scenario-2}
  \centering
  \begin{tabular}{r|r|r|r}
    \hline
    {\small\textbf{Dim}} &
    \multicolumn{1}{c|}{\small\textbf{EIG-LDA}} &
    \multicolumn{1}{c|}{\small\textbf{LDA++}} &    
	\multicolumn{1}{c}{\small\textbf{EIG-LDA++}} \\ 
    \hline
1024 & 0.8183 & 0.8025 & 0.8026 \\
2048 & 1.2894 & 1.2649 & 1.2651 \\
4096 & 1.8723 & 1.8565 & 1.8567 \\
8192 & 3.2417 & 3.2404 & 3.2407 \\
16384 & 6.8361 & 6.7349 & 6.7354 \\
32768 & 14.3921 & 14.9536 & 14.9545 \\
    \hline
  \end{tabular}  
\end{table}

\begin{table}[!t]
\renewcommand{\arraystretch}{1}
\caption{Average (over five runs) training time (in seconds) of the classical Algorithm~\ref{alg:lda-kernel} and the novel Algorithm~\ref{alg:prop-lda-kernel} for kernel LDA with $N=900$ samples. In all experiments, the optimal objective value $2.0000$ has been obtained.}
\label{tbl:timing-scenario-3}
  \centering
  \begin{tabular}{r|r|r}
    \hline
    {\small\textbf{Dim}} &
    \multicolumn{1}{c|}{\small\textbf{Algorithm~\ref{alg:lda-kernel}}} &
	\multicolumn{1}{c}{\small\textbf{Algorithm~\ref{alg:prop-lda-kernel}}} \\ 
    \hline	
1024 & 0.9124 & 0.7883 \\
2048 & 1.0109 & 0.9128 \\
4096 & 1.3693 & 1.2194 \\
8192 & 2.0191 & 1.9004 \\
16384 & 8.7322 & 8.5746 \\
32768 & 11.2022 & 11.1802 \\
    \hline
  \end{tabular}  
\end{table}


\begin{thebibliography}{10}
\expandafter\ifx\csname url\endcsname\relax
  \def\url#1{\texttt{#1}}\fi
\expandafter\ifx\csname urlprefix\endcsname\relax\def\urlprefix{URL }\fi
\expandafter\ifx\csname href\endcsname\relax
  \def\href#1#2{#2} \def\path#1{#1}\fi

\bibitem{fisher1936use}
R.~A. Fisher, The use of multiple measurements in taxonomic problems, Annals of
  eugenics 7~(2) (1936) 179--188.

\bibitem{rao1948utilization}
C.~R. Rao, The utilization of multiple measurements in problems of biological
  classification, Journal of the Royal Statistical Society. Series B
  (Methodological) 10~(2) (1948) 159--203.

\bibitem{fukunaga1990statistical}
K.~Fukunaga, Introduction to statistical pattern recognition (2nd ed.),
  Academic Press, 1990.

\bibitem{ye2005characterization}
J.~Ye, Characterization of a family of algorithms for generalized discriminant
  analysis on undersampled problems, Journal of Machine Learning Research
  6~(Apr) (2005) 483--502.

\bibitem{zhang2015sparse}
X.~Zhang, D.~Chu, R.~C. Tan, Sparse uncorrelated linear discriminant analysis
  for undersampled problems, IEEE Transactions on Neural Networks and Learning
  Systems 27~(7) (2015) 1469--1485.

\bibitem{wen2018robust}
J.~Wen, X.~Fang, J.~Cui, L.~Fei, K.~Yan, Y.~Chen, Y.~Xu, Robust sparse linear
  discriminant analysis, IEEE Transactions on Circuits and Systems for Video
  Technology 29~(2) (2018) 390--403.

\bibitem{Dornaika2020141}
F.~Dornaika, A.~Khoder, Linear embedding by joint robust discriminant analysis
  and inter-class sparsity, Neural Networks 127 (2020) 141--159.
\newblock \href {http://dx.doi.org/10.1016/j.neunet.2020.04.018}
  {\path{doi:10.1016/j.neunet.2020.04.018}}.

\bibitem{Zheng2021}
Z.~Zheng, H.~Sun, Y.~Zhou, Multiple discriminant analysis for collaborative
  representation-based classification, Pattern Recognition 112.
\newblock \href {http://dx.doi.org/10.1016/j.patcog.2021.107819}
  {\path{doi:10.1016/j.patcog.2021.107819}}.

\bibitem{hastie1996discriminant}
T.~Hastie, R.~Tibshirani, Discriminant analysis by gaussian mixtures, Journal
  of the Royal Statistical Society: Series B (Methodological) 58~(1) (1996)
  155--176.

\bibitem{sugiyama2007dimensionality}
M.~Sugiyama, Dimensionality reduction of multimodal labeled data by local
  fisher discriminant analysis, Journal of machine learning research 8~(May)
  (2007) 1027--1061.

\bibitem{lai2018robust}
Z.~Lai, N.~Liu, L.~Shen, H.~Kong, Robust locally discriminant analysis via
  capped norm, IEEE Access 7 (2018) 4641--4652.

\bibitem{zhu2006subclass}
M.~Zhu, A.~M. Martinez, Subclass discriminant analysis, IEEE Transactions on
  Pattern Analysis and Machine Intelligence 28~(8) (2006) 1274--1286.

\bibitem{gkalelis2011mixture}
N.~Gkalelis, V.~Mezaris, I.~Kompatsiaris, Mixture subclass discriminant
  analysis, IEEE Signal Processing Letters 18~(5) (2011) 319--322.

\bibitem{wan2017separability}
H.~Wan, H.~Wang, G.~Guo, X.~Wei, Separability-oriented subclass discriminant
  analysis, IEEE transactions on pattern analysis and machine intelligence
  40~(2) (2017) 409--422.

\bibitem{Chumachenko2021}
K.~Chumachenko, J.~Raitoharju, A.~Iosifidis, M.~Gabbouj, Speed-up and
  multi-view extensions to subclass discriminant analysis, Pattern Recognition
  111.
\newblock \href {http://dx.doi.org/10.1016/j.patcog.2020.107660}
  {\path{doi:10.1016/j.patcog.2020.107660}}.

\bibitem{martinez2005linear}
A.~M. Martinez, M.~Zhu, Where are linear feature extraction methods
  applicable?, IEEE Transactions on Pattern Analysis and Machine Intelligence
  27~(12) (2005) 1934--1944.

\bibitem{hamsici2008bayes}
O.~C. Hamsici, A.~M. Martinez, Bayes optimality in linear discriminant
  analysis, IEEE transactions on pattern analysis and machine intelligence
  30~(4) (2008) 647--657.

\bibitem{belhumeur1997eigenfaces}
P.~N. Belhumeur, J.~P. Hespanha, D.~J. Kriegman, Eigenfaces vs. fisherfaces:
  Recognition using class specific linear projection, IEEE Transactions on
  Pattern Analysis \& Machine Intelligence~(7) (1997) 711--720.

\bibitem{yang2004essence}
J.~Yang, Z.~Jin, J.-y. Yang, D.~Zhang, A.~F. Frangi, Essence of kernel fisher
  discriminant: Kpca plus lda, Pattern Recognition 37~(10) (2004) 2097--2100.

\bibitem{yang2005kpca}
J.~Yang, A.~F. Frangi, J.-y. Yang, D.~D. Zhang, Z.~Jin, Kpca plus lda: a
  complete kernel fisher discriminant framework for feature extraction and
  recognition, IEEE Transactions on pattern analysis and machine intelligence.

\bibitem{baudat2000generalized}
G.~Baudat, F.~Anouar, Generalized discriminant analysis using a kernel
  approach, Neural computation 12~(10) (2000) 2385--2404.

\bibitem{howland2004generalizing}
P.~Howland, H.~Park, Generalizing discriminant analysis using the generalized
  singular value decomposition, IEEE Transactions on Pattern Analysis \&
  Machine Intelligence~(8) (2004) 995--1006.

\bibitem{wang2007trace}
H.~Wang, S.~Yan, D.~Xu, X.~Tang, T.~Huang, Trace ratio vs. ratio trace for
  dimensionality reduction, in: 2007 IEEE Conference on Computer Vision and
  Pattern Recognition, IEEE, 2007, pp. 1--8.

\bibitem{jia2009trace}
Y.~Jia, F.~Nie, C.~Zhang, Trace ratio problem revisited, IEEE Transactions on
  Neural Networks 20~(4) (2009) 729--735.

\bibitem{Cao2019218}
M.~Cao, C.~Chen, X.~Hu, S.~Peng, Towards fast and kernelized orthogonal
  discriminant analysis on person re-identification, Pattern Recognition 94
  (2019) 218--229.
\newblock \href {http://dx.doi.org/10.1016/j.patcog.2019.05.035}
  {\path{doi:10.1016/j.patcog.2019.05.035}}.

\bibitem{ghiasi2019competitive}
K.~Ghiasi-Shirazi, Competitive cross-entropy loss: A study on training
  single-layer neural networks for solving nonlinearly separable classification
  problems, Neural Processing Letters 50~(2) (2019) 1115--1122.

\bibitem{ZareiGhiasi2022}
R.~Zarei-Sabzevar, K.~Ghiasi-Shirazi, A.~Harati, Prototype-based interpretation
  of the functionality of neurons in winner-take-all neural networks, IEEE
  Transactions on Neural Networks and Learning Systems (2022) 1--13\href
  {http://dx.doi.org/10.1109/TNNLS.2022.3155174}
  {\path{doi:10.1109/TNNLS.2022.3155174}}.

\bibitem{lu2005regularization}
J.~Lu, K.~N. Plataniotis, A.~N. Venetsanopoulos, Regularization studies of
  linear discriminant analysis in small sample size scenarios with application
  to face recognition, Pattern recognition letters 26~(2) (2005) 181--191.

\bibitem{juefei2016multi}
F.~Juefei-Xu, M.~Savvides, Multi-class fukunaga koontz discriminant analysis
  for enhanced face recognition, Pattern Recognition 52 (2016) 186--205.

\bibitem{samaria1994parameterisation_orl_dataset}
F.~S. Samaria, A.~C. Harter, Parameterisation of a stochastic model for human
  face identification, in: Proceedings of 1994 IEEE Workshop on Applications of
  Computer Vision, IEEE, 1994, pp. 138--142.

\bibitem{phillips1998feret}
P.~J. Phillips, H.~Wechsler, J.~Huang, P.~J. Rauss, The feret database and
  evaluation procedure for face-recognition algorithms, Image and vision
  computing 16~(5) (1998) 295--306.

\bibitem{phillips2000feret}
P.~J. Phillips, H.~Moon, S.~A. Rizvi, P.~J. Rauss, The feret evaluation
  methodology for face-recognition algorithms, IEEE Transactions on pattern
  analysis and machine intelligence 22~(10) (2000) 1090--1104.

\bibitem{lecun1998gradient_mnist_dataset}
Y.~LeCun, L.~Bottou, Y.~Bengio, P.~Haffner, et~al., Gradient-based learning
  applied to document recognition, Proceedings of the IEEE 86~(11) (1998)
  2278--2324.

\end{thebibliography}
\end{document}